\documentclass[11pt]{article}

\usepackage[a4paper,margin=1in]{geometry}
\usepackage[T1]{fontenc}
\usepackage[utf8]{inputenc}
\usepackage{mathptmx}
\usepackage{amsmath,amssymb,amsthm}
\usepackage{graphicx}
\usepackage{booktabs}
\usepackage{array}
\usepackage[ruled,vlined,linesnumbered]{algorithm2e}
\usepackage[colorlinks=true,linkcolor=blue,citecolor=blue,urlcolor=blue]{hyperref}
\usepackage{cleveref}
\usepackage{microtype}
\usepackage{placeins}

\newtheorem{definition}{Definition}[section]
\newtheorem{theorem}{Theorem}[section]

\newcommand{\E}{\mathbb{E}}

\title{When and Why LLM Causal Priors Help:\\ Closed-Loop Prior Selection for Amortized Causal Inference}

\author{
  Haohao Zhou\\
  \texttt{National Key Laboratory of Information Systems Engineering}\\
  \texttt{National University of Defense Technology, Changsha, Hunan, China}\\
  \texttt{haohaozhou@nudt.edu.cn}
}

\date{}

\begin{document}

\maketitle

\begin{abstract}
Causal effect estimation asks how an outcome would change under an intervention, and medicine, economics, and public policy all treat it as a foundational task. Prior-data fitted networks (PFNs) amortize the task: a model trained on large numbers of programmatically generated synthetic causal tasks reads a new problem's observational data into context and returns an interventional-effect estimate in a single forward pass. The capability of such models is largely determined by the synthetic training prior, which is currently designed by hand, a bottleneck acknowledged by both Do-PFN and CausalPFN. Large language models (LLMs) can now ``draw'' plausible causal graphs for a given domain, suggesting that LLM-distilled graphs could serve as prior material. Whether injecting such graphs helps at all, where any gain comes from, and when injection helps. Practice has so far relied on manual trial and error. We propose a \emph{closed-loop prior selection framework} that casts prior injection as a budget-constrained optimization over a candidate prior pool. Candidates undergo cheap post-training and are scored by a composite metric dominated by real-domain generalization; the winner then receives full training and paired statistical validation. On a 7.34M-parameter Do-PFN, the framework's winner attains a formally significant $2.75\times$ gain on the primary evaluation domain ($n=5$ paired seeds, $p=0.0086$), and its error falls below that of the uninjected official base. Because the winner is built on a locally retrained base and the official base is not part of the paired test, this is a descriptive cross-lineage comparison. Generalization on an adjacent monitoring domain improves significantly ($p=0.0109$), and no monitored capability degrades. Mechanism experiments show that the gain depends on the semantic content of the distilled graph rather than its structural diversity alone does not produce it (directional evidence). The benefit has limits. Nine controlled experiments delineate its boundary and support a \emph{three-condition empirical regularity}: injection yields significant gains only when the base is underfit on the task domain, the prior domain matches the task domain, and the task lies within the support of the base's training prior. Two negative results carry general lessons: averaging multiple distillation sources dilutes gains, and synthetic probe metrics systematically diverge from real-domain transfer. With this framework and this regularity in hand, the use of LLM causal priors stops being manual trial and error and becomes an empirically verifiable selection problem.
\end{abstract}

\section{Introduction}

Decision-makers act on what an intervention would produce, and they rarely hold that information in hand, whether the question is whether a patient would recover better under a different treatment or how sales would respond to a different price. Causal effect estimation provides the statistical machinery for such questions \cite{pearl2009causality,rubin1974estimating}. Its core difficulty is that only naturally occurring data can be observed: interventional outcomes are expensive or ethically unavailable, so effects must be identified from observational data under additional assumptions such as ignorability. Classical methods model each new problem separately, at high cost and with little reuse.

Amortized causal inference takes a different path. A prior-data fitted network (PFN) \cite{muller2022transformers} learns inference over a task distribution (the prior) by training on large numbers of programmatically generated synthetic tasks, and at inference time it conditions on context and answers directly. Do-PFN \cite{robertson2025dopfns} instantiates the idea for causal effect estimation by encoding the intervention operator into prior sampling, so that tasks are generated as ``observational data $\to$ outcome under intervention''; CausalPFN \cite{balazadeh2025causalpfn} extends the task coverage. Once trained, the model reads a new problem's observational data into context and yields an interventional-effect estimate in one forward pass, with no per-problem optimization. Both models share a decisive property. Which causal systems they can handle is almost entirely determined by the synthetic training prior; the prior's coverage is the model's capability boundary. Because the prior is designed by hand, that coverage is limited by the designer's domain knowledge, and both Do-PFN and CausalPFN list prior design as an explicit limitation \cite{robertson2025dopfns,balazadeh2025causalpfn}.

Large language models (LLMs) offer a way past this bottleneck. Systematic evaluations show that LLMs can produce causal graphs that agree with domain knowledge to a considerable degree \cite{kiciman2023causal}, making them an off-the-shelf, on-demand source of causal knowledge. LLM-distilled causal graphs can then serve as prior material: training tasks generated from them carry domain semantics, and an existing base can continue to train on them, freeing amortized causal inference from hand-designed priors. Distilled graphs, though, are not reliable ground truth. They are highly sensitive to prompt wording, decoding temperature, and graph scale \cite{kiciman2023causal}, so their value and risk must be measured by injection experiments; graph-level fidelity alone does not establish them.

This possibility has never been systematically validated. Whether injection helps at all is unknown, and so is the source of any gain: does it come from the graph's semantic content, or merely from narrowing the training distribution? If narrowing dominates, LLM causal knowledge carries little real value. Equally open is whether gains can be separated from collateral costs, that is, whether some configuration improves the primary task without harming adjacent capabilities. And when several candidate graphs coexist, drawn from different distillation sources or decoding temperatures, there is as yet no systematic way to decide which one to inject, or to automate that decision. In practice, manual trial and error settles these questions. \emph{That gap defines the problem this paper addresses.}

We address these questions with a \emph{closed-loop prior selection framework}, a two-stage procedure. In the first round every candidate undergoes cheap post-training and is ranked by a composite score dominated by real-domain transfer. In the second round the top candidates receive full training and paired statistical validation, and this round delivers the final verdict. The base's architecture and training algorithm stay untouched throughout. What changes is the status of ``which prior to inject'': a manual decision becomes an optimizable, verifiable object.

On a 7.34M-parameter Do-PFN, the framework's winner attains a formally significant $2.75\times$ gain on the primary evaluation domain ($n=5$ paired seeds, $p=0.0086$), and its error level falls below that of the uninjected official base. This is a descriptive cross-lineage comparison: the winner is built on a locally retrained base, and the official base is not included in the paired test (see Sections~\ref{sec:m4} and \ref{sec:m6}). Generalization on the adjacent monitoring domain improves simultaneously (paired test $p=0.0109$), and no monitored capability degrades (descriptive). The gain, however, is not universal. Nine controlled experiments further show (i) that injecting the same recipe into an already-strong base degrades it $4$--$9\times$ with a complete ranking reversal, (ii) that domain-specific priors are significantly harmful on out-of-domain benchmarks, and that even domain-matched distillation cannot repair a benchmark whose dimensionality far exceeds the training-prior support. We therefore induce a \emph{three-condition empirical regularity}: injection yields significant gains only when the base is underfit, the prior domain matches the task domain, and the task lies within the training-prior support (full statement, evidence, and inductive limits in Section~\ref{sec:discussion}). Mechanism experiments (Section~\ref{sec:m5}) further show that the gain comes from semantic content drives the gain; structural diversity alone does not (directional evidence).

\paragraph{Contributions.}

Five lines of work feed directly into this paper: PFN-family amortized inference; LLMs as causal-knowledge sources; data-side intervention routes; the closed-loop search and self-improvement family; and the baseline spectrum and standard benchmarks.
\begin{itemize}
  \item \textbf{C1 (Framework).} We cast prior injection as a budget-constrained optimization (Equation~\eqref{eq:selection}) and provide a two-stage closed-loop selection procedure (cheap-proxy screening plus full paired validation), turning ``which prior to inject'' from a manual decision into an evaluably optimizable object. The procedure inherits the two-stage ``candidate pool + cheap proxy + full validation'' paradigm established by neural architecture search and hyperparameter optimization (Section~\ref{sec:related-search}). Our increment is to apply it to prior-distribution selection and to provide a controlled empirical delineation of its boundary.
  \item \textbf{C2 (Main result).} The framework's winner delivers a formally significant $2.75\times$ gain on the primary domain ($n=5$, $p=0.0086$), with an error level below the uninjected official base (descriptive cross-lineage comparison, Sections~\ref{sec:m4}, \ref{sec:m6}), while significantly improving adjacent-domain generalization ($p=0.0109$) without eroding existing capability (descriptive).
  \item \textbf{C3 (Mechanism and theory).} We give an evidence chain for ``content beats diversity'': the random-graph control, the V-shaped temperature response, and the reversal of the cheap-screening ranking (directional evidence). Three theorems give this picture conceptual grounding: an ensemble-error lower bound, probe--transfer ranking reversal, and unbounded risk outside the support. They motivate, respectively, selection over ensembling, real-domain-dominated proxies, and a support boundary on repair.
  \item \textbf{C4 (Applicability boundary).} Nine modules delineate the boundary and induce the three-condition empirical regularity (base underfit, prior-domain match, task within support), including two honest negative boundaries (cross-domain negative transfer, out-of-support high-dimensional non-repair), and make explicit the inductive limits (each condition rests on a single instance, conditions two and three are not separable, no quantitative prediction), turning ``when does injection help'' into an operational criterion.
  \item \textbf{C5 (External validity).} We run a same-protocol, per-domain comparison against a direct SOTA and a classical-estimator panel. The primary domain wins against all baselines including the SOTA ($4.1\times$/$6.1\times$ over CausalPFN); the secondary domain is an honest boundary (Section~\ref{sec:m8}), and two standard causal benchmarks give two honest boundaries (Section~\ref{sec:m9}).
\end{itemize}

\section{Related Work}

\subsection{PFN-family amortized inference}

Amortized inference rests on a simple exchange: one training phase in place of per-problem optimization. A model first learns inference over a parameterized task distribution, then at inference time conditions on context and answers directly \cite{muller2022transformers}. TabPFN \cite{hollmann2023tabpfn,hollmann2024tabular} brought this paradigm to tabular supervised learning, showing that in the small-sample regime a model pretrained on synthetic tasks can match or exceed classical methods tuned per dataset. A subsequent line extends the inference target from prediction to causation: Do-PFN \cite{robertson2025dopfns} encodes the intervention operator into prior sampling so that tasks are generated as ``observational data $\to$ outcome under intervention'', and the model outputs an interventional-effect estimate in one forward pass; CausalPFN \cite{balazadeh2025causalpfn} extends the causal task coverage (pretraining recipe details follow the original papers). This family shares one structural constraint: \emph{the prior determines the capability boundary}. In all existing work the prior has been designed by hand, and both Do-PFN and CausalPFN list prior design as an explicit limitation \cite{robertson2025dopfns,balazadeh2025causalpfn}. This paper operates at precisely this interface. We do not change the model architecture or training algorithm; we turn ``where the prior comes from and which one to pick'' into an optimizable object.

\subsection{LLMs as causal-knowledge sources}
\label{sec:related-llm}

Evaluation studies of LLM causal capability (often grouped as LLM4CD \cite{kiciman2023causal}) run along two lines: generation of causal graphs and causal direction, and pairwise causal judgment and causal question answering. The overall picture is ``knowledge with bias''. LLMs can generate causal graphs that agree with domain knowledge to a considerable degree, but are highly sensitive to prompt wording, decoding temperature, and graph scale. The deviation between distilled and true graphs is structural rather than random noise (this paper's M1 gain decomposition and M5 temperature V-shape provide injection-side evidence for this picture). Iterative approaches to improving causal discovery have also appeared: closed-loop routes in which an LLM generates causal hypotheses that are adjudicated by argumentation and empirical data (e.g., COAT-style methods). Because the authorship attribution of this line remains unresolved, we note the direction without citing its details. Our departure from this line is that we do not improve ``graph generation''; we treat the generated artifacts as candidate material and study the selection problem of ``which one to inject, into which base''.

\subsection{Data-side intervention and prior-injection routes}

Three classes of work are closest to our injection form. Curriculum learning organizes training data by difficulty \cite{bengio2009curriculum}; data augmentation expands the diversity of synthetic samples; continuing to train a pretrained model on synthetic data is common in LLM alignment and domain adaptation. In addition, the data selection / data attribution line studies ``which training samples contribute most to a model'', conceptually close to our ``which prior contributes most to a model'', but its object is subset selection from an existing corpus, whereas our object is a \emph{parameterizable, generable task distribution}. The candidate is not ``which data to pick'' but ``which generator to pick''. What these routes share is that they treat ``what data to use'' as a pre-given or heuristically fixed manual decision; we instead cast that decision itself as a closed-loop optimization with cheap-proxy evaluation (Equation~\eqref{eq:selection}), and this is where we differ.

\subsection{Closed-loop search and self-improvement family}
\label{sec:related-search}

Two mature lines of work close the loop on ``selection''. The search line: neural architecture search \cite{elsken2019nas} and hyperparameter optimization \cite{li2018hyperband} have long established the two-stage ``candidate pool + cheap-proxy evaluation + full validation'' paradigm, with cheap-proxy design (early stopping, low fidelity, weight sharing) as the core problem. Our 5k-step pre-screening and self-restraint clause (Section~\ref{sec:twostage}) are a low-fidelity instance of the same idea. The self-improvement line: AREX \cite{arex2026} provides design principles for round-transition protocols (accept/refine/restart) and ``improvement-state'' bookkeeping (we verified its full text and adopted it for our framework's round management). V-STaR \cite{hosseini2024vstar} alternates generator and verifier training and uses failed samples as verifier training material, supporting the idea that ``rejected candidates are not waste''. The asymmetric-verification line \cite{zeng2025asymmetric} discusses compute allocation when verification and generation costs are asymmetric. Our setting, however, runs in the opposite direction: \emph{prior verification (full post-training) is far more expensive than generation (distilling one graph)}, which is why cheap screening is indispensable in the two-stage loop. Our fundamental difference from this family lies in the optimization object: prior work optimizes answers, trajectories, or prompts, whereas we optimize the semantic content of a prior distribution.

\subsection{Baseline spectrum and standard benchmarks}

Our external-validity comparison (Sections~\ref{sec:m8}, \ref{sec:m9}) covers two objects. The first comprises classical per-dataset estimators: the S-/T-/X-learner meta-learner family \cite{kunzel2019metalearners}, the DR-learner \cite{kennedy2023dr}, random-forest-based heterogeneous-effect estimation (the methodological basis of CausalForestDML \cite{athey2018grf}), and a naive ATE baseline estimating only the average effect; these represent the ``non-amortized, per-dataset fit'' end. The second is the causal community's standard evaluation benchmarks: IHDP \cite{hill2011ihdp} (semi-synthetic, with individual-effect ground truth) and Lalonde \cite{lalonde1986evaluating,dehejia1999causal} (real observational data, with a randomized-experiment estimate as reference) are the two most widely used benchmarks in the causal-effect-estimation literature. We place both injected configurations and base versions on these benchmarks to answer whether the gain survives outside our own domain.

\section{Method: Closed-Loop Prior Selection Framework}

The preliminaries are given in Section~\ref{sec:prelim}; Section~\ref{sec:formulation} formalizes the prior-injection problem, Section~\ref{sec:design} presents the framework design, and Section~\ref{sec:theory} gives the theoretical analysis.

\subsection{Preliminaries: structural causal models and amortized inference}
\label{sec:prelim}

\begin{definition}[Structural causal model \cite{pearl2009causality}]
\label{def:scm}
A structural causal model (SCM) is a tuple $\mathcal{M}=\langle U, V, F, P(U)\rangle$: $U$ is a set of exogenous noise variables with distribution $P(U)$; $V=\{V_1,\dots,V_d\}$ is a set of endogenous variables; $F=\{f_1,\dots,f_d\}$ is a set of structural equations specifying how each variable is determined by its direct causes and noise:
\begin{equation*}
V_i := f_i(\mathrm{PA}_i,\, U_i), \qquad i=1,\dots,d.
\end{equation*}
\end{definition}

Intuitively, an SCM uses a set of equations to describe how each variable in a system is ``generated'' by its direct causes; the directed graph formed by these cause--effect dependencies is what is commonly called a causal graph. An SCM induces an observational distribution $P_{\mathcal{M}}(v)$. Intervening on $V_j$ with $\mathrm{do}(V_j=t)$ \cite{pearl2009causality} replaces the $j$-th structural equation with the constant $t$, yielding the post-intervention model $\mathcal{M}_t$ and interventional distribution $P_{\mathcal{M}}(v \mid \mathrm{do}(t))$. The potential outcome of treatment $T$ on outcome $Y$ is $y_t(u) := Y$ evaluated under $\mathcal{M}_t$ with $U=u$ \cite{rubin1974estimating}; the individual and average treatment effects are
\begin{equation*}
\tau(u) := y_1(u) - y_0(u), \qquad \mathrm{ATE} := \E_U\big[\,y_1(U) - y_0(U)\,\big].
\end{equation*}
In plain terms, a potential outcome describes ``the outcome the same individual would have received under treatment versus no treatment''; because the same individual cannot be in both states at once, the individual treatment effect cannot be directly observed, and this is the fundamental difficulty of causal inference. The ATE is its population average and the estimand of this paper. The task of causal-effect estimation is to estimate the ATE (or its conditional version $\mathrm{CATE}(x)$) from observational samples of $P_{\mathcal{M}}$. Going from observation to intervention requires assumptions encoded by the graph (e.g., ignorability); this is what makes the causal graph a valuable knowledge carrier.

\paragraph{Amortized causal inference.}
We formalize a causal-effect-estimation problem as a task $\tau = (\mathcal{M}, D, q)$: an SCM $\mathcal{M}$ generates an observational dataset $D=\{(x_i, t_i, y_i)\}_{i=1}^n$; a query $q=(x_q, t_q)$ asks for the outcome of an individual under intervention, whose ground truth $y_q^*$ is solved from the post-intervention model $\mathcal{M}_{t_q}$. A \emph{task prior} is a distribution $\mathcal{P}$ over tasks:
\begin{equation*}
\tau \sim \mathcal{P}: \quad \mathcal{M}\sim P_{\mathcal{M}},\;\; D\sim P_{\mathcal{M}}^{\,n},\;\; q\sim P_q .
\end{equation*}
PFN-family models parameterize a map $f_\theta$ that conditions on $D$ as context and directly predicts $q$, with training objective
\begin{equation}
\label{eq:pfn}
\mathcal{L}(\theta) \;=\; \E_{\tau\sim\mathcal{P}}\; \E_{(D,\,q,\,y^*)\sim\tau}\; \ell\big(f_\theta(D,q),\, y^*\big).
\end{equation}
In Equation~\eqref{eq:pfn}: $\tau \sim \mathcal{P}$ is a causal task sampled from the prior; $(D, q, y^*)$ are respectively that task's observational data, the query to answer, and the query's ground truth (solved from the SCM under intervention); $f_\theta$ is the model parameterized by $\theta$, which takes $D$ as input context and predicts $q$; $\ell(\cdot,\cdot)$ is a task-level loss measuring the discrepancy between prediction and ground truth. The objective seeks parameters $\theta$ minimizing the model's average loss over all tasks covered by the prior. After training, the model needs no per-task optimization for any new task under the same prior; one forward pass $f_\theta(D,q)$ yields the estimate, and the cost of inference is thereby amortized. The paradigm has one property that is central to this paper: \emph{the capability set of $f_\theta$ is determined by the support of the prior $\mathcal{P}$}, i.e., the model can only reliably handle causal-system structures that appear in $\mathcal{P}$.

\subsection{Problem formulation: the injection operator and amortized prior selection}
\label{sec:formulation}

As noted in Section~\ref{sec:related-llm}, the deviation between distilled and true graphs is structural; their value and risk therefore cannot be inferred from graph-level fidelity and must be measured by injection. We first formalize the process that takes a distilled graph to an injected model.

\begin{definition}[Distilled graph, compilation, injection]
Let $G_L$ be a causal graph distilled by an LLM for a target domain. A compilation pipeline compiles $G_L$ into an executable spec $s$ (an SCM program in the sense of Definition~\ref{def:scm}), from which a task distribution $q_s$ is generated. \emph{Injection} means continuing to train a pretrained base $\theta_0$ on $q_s$:
\begin{equation}
\label{eq:injection}
\mathcal{T}(\theta_0, q) \;:=\; \arg\min_{\theta}\;\; \E_{\tau\sim q}\;\ell\big(f_\theta(\tau)\big), \qquad \text{initialized at } \theta_0.
\end{equation}
\end{definition}

Here $\theta_0$ is the pretrained base's parameters; $q$ is the task distribution generated by some candidate prior; $\ell$ has the same meaning as in Equation~\eqref{eq:pfn}. The injection operator $\mathcal{T}(\theta_0, q)$ denotes the process of ``initializing at $\theta_0$ and continuing to train on the tasks of candidate prior $q$''; its output is the injected model's parameters. Intuitively, injection lets the model keep learning on the tasks generated by a distilled graph, ``absorbing'' the causal structure carried by the graph into its parameters.

The risk of an injected model on an evaluation domain (task distribution $p$) is $R_p(\theta) := \E_{\tau\sim p}\,\ell\big(f_\theta(\tau)\big)$, an error-type loss where lower is better. This gives the core optimization problem of this paper, \emph{amortized prior selection}: given a candidate prior pool $\mathcal{Q}=\{q_1,\dots,q_K\}$, a fixed base $\theta_0$, and a target evaluation domain $p$, solve, within a total budget $B$,
\begin{equation}
\label{eq:selection}
q^*(\theta_0, p) \;=\; \arg\min_{q\,\in\,\mathcal{Q}}\;\; R_p\big(\mathcal{T}(\theta_0,\, q)\big), \qquad \text{s.t.}\;\; \sum_{q}\mathrm{cost}(q)\le B.
\end{equation}
In Equation~\eqref{eq:selection}: $\mathcal{Q}$ is the candidate prior pool; $q^*$ is the prior to be selected; $\mathcal{T}(\theta_0, q^*)$ is the model obtained after injecting that prior (Equation~\eqref{eq:injection}); $R_p(\cdot)$ is the model's risk on the evaluation domain (lower is better, hence $\arg\min$); $\mathrm{cost}(q)$ is the training budget consumed evaluating candidate $q$, and the constraint $\sum_q \mathrm{cost}(q)\le B$ is the evaluation-budget cap. Two scope qualifications for $q^*$ deserve emphasis. First, $q^*$ is explicitly conditioned on the base $\theta_0$ and the evaluation domain $p$: our winner is the optimal prior for a specific base and a specific domain, and its validity is confined to that base and domain. M6 (Section~\ref{sec:m6}) shows the same recipe reverses ranking on another base, and M9 (Section~\ref{sec:m9}) shows it fails once the task leaves the training-prior support; $q^*$ is therefore not a cross-base, cross-domain universal solution. Second, the composite score $\mathrm{Score}(q)$ in Equation~\eqref{eq:score} is \emph{not} an unbiased estimate of $R_p$ but an engineered cheap proxy, whose design rationale follows Theorem~\ref{thm:reversal} (proxy evaluation must be dominated by real-domain transfer). We do not claim that $\mathrm{Score}$ faithfully approximates $R_p$; the final verdict on $R_p$ is given by the second-round full validation (Section~\ref{sec:twostage}). The difficulty of Equation~\eqref{eq:selection} is that evaluating $R_p$ is expensive (full post-training), so cheap proxies are needed for coarse candidate screening; this is the design goal of the framework in Section~\ref{sec:design}.

\subsection{Framework design}
\label{sec:design}

\subsubsection{Empirical premise for the injection channel}

Before building the framework, we ruled out a lighter alternative: inference-time injection. There, the distilled graph is transcribed into a few synthetic demonstration samples and concatenated into the input context, in the hope that the model would follow them at test time. Relative to the uninjected official base (primary-domain NMSE $0.031$), every demonstration-injection configuration collapsed, with the error rising to $0.09$--$0.74$. Demonstrations generated by the domain's true graph collapsed the most ($+0.714$), more than distilled or random graphs. Correctness of knowledge cannot compensate for the mismatch between the injected distribution and the training distribution. This establishes the injection channel as the post-training form of Equation~\eqref{eq:injection} (full data in the appendix).

\subsubsection{Compilation and observational-regime derivation}

\begin{definition}[Three observational regimes]
\label{def:regimes}
Given a spec $s$, three families of training tasks are derived, corresponding to three observational regimes:
\begin{itemize}
  \item \emph{Confounded} (conf): all unobserved-confounding structure is retained and observational data are generated in the presence of common causes; task distribution $q_s^{\mathrm{conf}}$.
  \item \emph{Exogenous intervention} (exo): edges from unobserved variables to the treatment are removed, corresponding to settings where treatment is independent of confounders; $q_s^{\mathrm{exo}}$.
  \item \emph{Partially observed} (part): some latent variables are observed, corresponding to incomplete-information settings; $q_s^{\mathrm{part}}$.
\end{itemize}
\end{definition}

A mixed prior mixes the three families by a ratio $\pi$:
\begin{equation}
\label{eq:mix}
q_s^{\pi} \;:=\; \pi_1\, q_s^{\mathrm{conf}} + \pi_2\, q_s^{\mathrm{exo}} + \pi_3\, q_s^{\mathrm{part}}, \qquad \pi = (0.5,\, 0.3,\, 0.2).
\end{equation}
Here $\pi_1,\pi_2,\pi_3$ are the mixing ratios of the three observational regimes, satisfying $\pi_1+\pi_2+\pi_3=1$; the mixed prior $q_s^{\pi}$ samples training tasks from the three families by these ratios. Training on a single regime alone makes the model overfit the statistical patterns specific to that regime; mixing lets the model see different observational forms of the same causal structure. $\pi$ therefore acts as a dosage knob for prior injection, whose non-monotone effect is empirically demonstrated in Section~\ref{sec:m2}.

\subsubsection{Candidate pool}

The candidate pool expands along two orthogonal dimensions:
\begin{equation*}
\mathcal{Q} \;=\; \big\{\, q_{s,\tau} : s \in S,\;\tau \in \{0.3,\,0.7,\,1.5\} \,\big\} \;\cup\; \big\{ q_{\mathrm{oracle}},\; q_{\mathrm{rand}} \big\},
\end{equation*}
where $S$ is the set of distillation sources (the primary source V4, a 9B-parameter LLM, and an alternative distillation configuration AESD), and $\tau$ is the decoding temperature during distillation (denoted v4, v4t07, v4t15). The two reference configurations are the domain's true graph $q_{\mathrm{oracle}}$ and a semantically shuffled random graph $q_{\mathrm{rand}}$ (used to separate ``content'' from ``diversity''). The temperature dimension carries the mechanism hypothesis: if the gain comes from structural diversity, $q_{\mathrm{rand}}$ should be equivalent to the best configuration; if it comes from semantic content, only distilled graphs carrying correct content help (adjudicated in Section~\ref{sec:m5}).

\subsubsection{Composite score}

Each candidate undergoes cheap post-training $\widetilde{\mathcal{T}}(q)$ (5k steps), and four sub-scores are computed: primary-domain generalization $m_1(q) = -\widehat{R}_{\mathrm{law}}\big(\widetilde{\mathcal{T}}(q)\big)$, diagnostic fit $m_2(q)$ (confounded diagnostic panel), diagnostic generalization $m_3(q)$ (general diagnostic panel), and capability retention $m_4(q)$ (the negative of erosion of the base's existing capability). The composite score is
\begin{equation}
\label{eq:score}
\mathrm{Score}(q) \;=\; \sum_{i=1}^{4} w_i\, m_i(q), \qquad w = (0.50,\; 0.25,\; 0.15,\; 0.10).
\end{equation}
In Equation~\eqref{eq:score}: $\widetilde{\mathcal{T}}(q)$ is the model obtained by cheap post-training (5k steps) of candidate $q$; $m_1(q),\dots,m_4(q)$ are the four sub-scores, all normalized to ``higher is better'', in order primary-domain generalization $m_1$, confounded diagnostic fit $m_2$, diagnostic generalization $m_3$, and capability retention $m_4$; $w_1,\dots,w_4$ are frozen weights summing to 1, fixed once and not adjusted across all experiments. The composite score is a weighted average over several evaluation dimensions, with real-domain generalization weighted above half. \emph{This weighting is not a prior preference but a direct repair motivated by Theorem~\ref{thm:reversal} and the earlier empirical divergence (Sections~\ref{sec:m1}--\ref{sec:m4})}: under equal weights, the framework would systematically pick wrong.

\subsubsection{Two-stage closed loop}
\label{sec:twostage}

The two-stage procedure is given in Algorithm~\ref{alg:loop}. The round-one ranking carries a pre-registered self-restraint clause. If the ranking positions of the reference configurations $\{q_{\mathrm{oracle}}, q_{\mathrm{rand}}\}$ violate the pre-registered expectation, the ranking is demoted to ``exclude clearly poor candidates only'' and no global discriminative power is claimed (this clause was in fact triggered in Section~\ref{sec:m4}). Statistical discipline: all comparisons use $n\ge3$ seeds and paired $t$-tests; significance requires $p<0.05$ together with an effect size passing the pre-registered threshold; standard deviations use the sample convention (ddof=1).

\begin{algorithm}[t]
\caption{Two-stage closed-loop prior selection}
\label{alg:loop}
\SetAlgoLined
\KwIn{candidate pool $\mathcal{Q}$, base $\theta_0$, budget (cheap $B_1 = 5$k steps, full $B_2 = 20$k steps)}
\textbf{Round 1 (cheap screening):}\;
\For{$q \in \mathcal{Q}$}{
  $\widetilde{\theta} \gets$ post-train on $q$ for $B_1$ steps (initialized at $\theta_0$)\;
  compute $\mathrm{Score}(q)$ (Equation~\eqref{eq:score})\;
}
\If{reference-config ranking violates pre-registered expectation}{
  demote ranking to ``exclude clearly poor candidates'' \tcp*{self-restraint clause}
}
take the top candidate set $\mathcal{Q}' \subseteq \mathcal{Q}$\;
\textbf{Round 2 (full validation):}\;
\For{$q \in \mathcal{Q}'$}{
  $\theta \gets$ post-train on $q$ for $B_2$ steps, $n$ seeds (initialized at $\theta_0$)\;
  paired statistical test on $R_p(\theta)$\;
}
\KwOut{winner $q^*$ and a pre-registered verdict report}
\end{algorithm}

\begin{figure}[t]
\centering
\includegraphics[width=\textwidth]{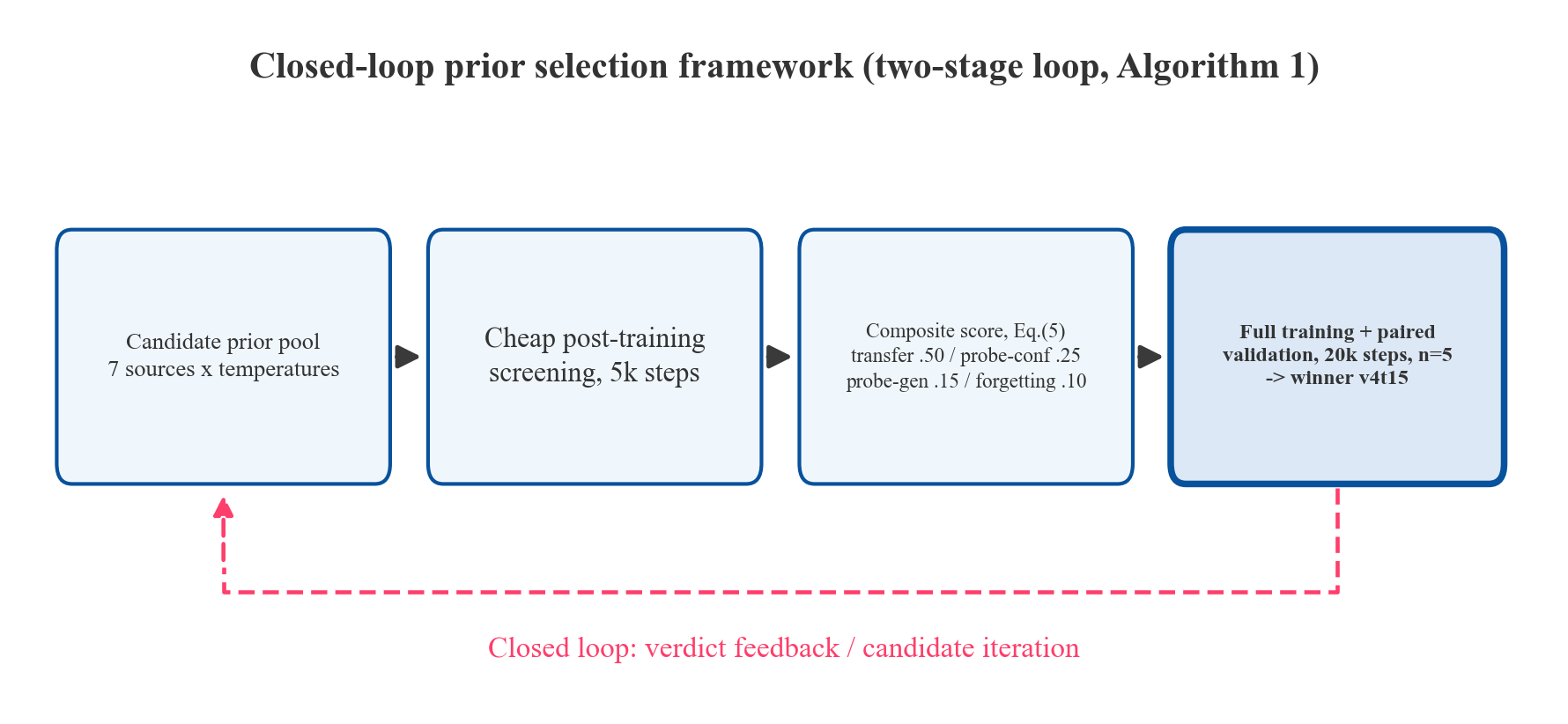}
\caption{Schematic of the closed-loop prior selection framework (two-stage loop, Algorithm~\ref{alg:loop}). Conceptual schematic.}
\label{fig:f1}
\end{figure}

Figure~\ref{fig:f1} sketches the framework.

\FloatBarrier
\subsection{Theoretical analysis}
\label{sec:theory}

This section gives three theorems. Each corresponds to one class of mechanism evidence: the ensemble dilution of Section~\ref{sec:m3}, the diagnosis--generalization divergence of Sections~\ref{sec:m1}, \ref{sec:m3}, and \ref{sec:m4}, and the out-of-support non-repair of Section~\ref{sec:m9}. Each is presented as assumption--statement--proof--remark (experimental anchoring). We note up front that the three-condition empirical regularity of Section~\ref{sec:discussion} is an empirical induction, not a theorem; the theorems here provide conceptual grounding for the selection mechanism and the non-repair boundary, and we make no stronger claim.

\begin{theorem}[Non-vanishing lower bound on ensemble error under correlated errors]
\label{thm:ensemble}
\emph{Assumption.} $K$ distillation sources produce predictors $f_k = f^* + \varepsilon_k$ ($k=1,\dots,K$), where $f^*$ is the target function, the errors satisfy $\E[\varepsilon_k]=b_k$, $\E[\varepsilon_k^2]=\sigma^2$, and there exists $\rho\in(0,1]$ such that for all $j\neq k$, $\E[\varepsilon_j\varepsilon_k]\ge \rho\,\sigma^2$.

\emph{Statement.} The equal-weight ensemble $\bar{f}=\frac{1}{K}\sum_k f_k$ has mean-squared error
\begin{equation*}
\mathrm{MSE}(\bar{f}) \;=\; \frac{\sigma^2}{K} + \frac{K-1}{K}\,\bar{C} \;\ge\; \rho\,\sigma^2, \qquad \bar{C} := \frac{1}{K(K-1)}\sum_{j\neq k}\E[\varepsilon_j\varepsilon_k],
\end{equation*}
so the error has a lower bound that does not vanish as $K$ grows; by contrast, if the errors are pairwise uncorrelated ($\bar{C}=0$), then $\mathrm{MSE}(\bar{f})=\sigma^2/K$.
\end{theorem}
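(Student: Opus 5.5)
The plan is to expand the squared error of the ensemble directly. Since $\bar f - f^* = \frac{1}{K}\sum_k \varepsilon_k$, we interpret $\mathrm{MSE}(\bar f) = \E[(\bar f - f^*)^2]$ as the second moment of the averaged error. This matches the assumption, which also bounds second moments $\E[\varepsilon_k^2]=\sigma^2$ and not variances. Writing
\begin{equation*}
\Big(\tfrac{1}{K}\textstyle\sum_k \varepsilon_k\Big)^2 = \tfrac{1}{K^2}\Big(\textstyle\sum_k \varepsilon_k^2 + \sum_{j\neq k}\varepsilon_j\varepsilon_k\Big)
\end{equation*}
and taking expectations term by term, the diagonal part contributes $K\sigma^2/K^2 = \sigma^2/K$. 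The off-diagonal part contributes $\frac{1}{K^2}\sum_{j\neq k}\E[\varepsilon_j\varepsilon_k] = \frac{K(K-1)}{K^2}\bar C = \frac{K-1}{K}\bar C$ by the definition of $\bar C$. This gives the stated identity exactly. The biases $b_k$ need no separate treatment, because they are already absorbed into the second moments.

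For the lower bound, the correlation assumption gives $\bar C \ge \rho\sigma^2$, since $\bar C$ is an average of terms each at least $\rho\sigma^2$. Then
\begin{equation*}
\mathrm{MSE}(\bar f) \ge \frac{\sigma^2}{K} + \frac{K-1}{K}\rho\sigma^2 = \rho\sigma^2 + \frac{(1-\rho)\sigma^2}{K} \ge \rho\sigma^2,
\end{equation*}
where the last step uses $\rho\le 1$. The bound $\rho\sigma^2$ does not depend on $K$, so it does not vanish as $K\to\infty$. The right-hand side also decreases monotonically to $\rho\sigma^2$, which shows the bound is asymptotically tight when all cross moments equal $\rho\sigma^2$. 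For the uncorrelated case, setting $\bar C=0$ in the identity immediately gives $\sigma^2/K$.

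No step is a real obstacle. The one point needing care is convention: we must state that MSE denotes the raw second moment of the error, and that "uncorrelated" means vanishing cross moments $\E[\varepsilon_j\varepsilon_k]=0$, not vanishing covariances. If the biases $b_k$ are nonzero, vanishing cross moments is a stronger condition than zero covariance. We would add a brief remark on this and on the consistency requirement that $\rho\sigma^2\le\sigma^2$, which Cauchy--Schwarz guarantees. We would then connect the result to ensemble dilution: when distillation sources share a common LLM bias, averaging cannot push the error below the shared component. This motivates selecting a single candidate rather than ensembling.
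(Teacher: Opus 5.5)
Your proposal is correct and takes the same route as the paper. You expand $\E[(\bar f-f^*)^2]$ into $K$ diagonal terms equal to $\sigma^2$ and $K(K-1)$ cross moments, each at least $\rho\sigma^2$, then use $\rho\le 1$ to reach $\rho\sigma^2$; your explicit derivation of the identity through $\bar C$ and your remarks on conventions (raw second moments, and ``uncorrelated'' meaning vanishing cross moments rather than covariances when $b_k\neq 0$) spell out points the paper's proof leaves implicit.
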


\begin{proof}
Expanding $\E[(\bar{f}-f^*)^2]=\frac{1}{K^2}\sum_j\sum_k\E[\varepsilon_j\varepsilon_k]$, the $K$ diagonal terms each equal $\sigma^2$ and the $K(K-1)$ off-diagonal terms each are at least $\rho\sigma^2$. Hence
\begin{equation*}
\mathrm{MSE}(\bar{f}) \;\ge\; \frac{1}{K^2}\big[K\sigma^2 + K(K-1)\rho\sigma^2\big] \;=\; \frac{\sigma^2}{K} + \frac{K-1}{K}\rho\sigma^2 \;\ge\; \rho\,\sigma^2. \qedhere
\end{equation*}
\end{proof}

\emph{Remark (experimental anchoring).} Variance compression by ensembling presupposes decorrelated errors; the graph errors of different distillation sources for the same domain are typically positively correlated (similar omissions, similar spurious edges), so the positive-correlation term $\bar{C}$ dominates. M3's observation is consistent with this picture: three-source ensembling degrades by $0.111$, all three seeds degrade, and the cross-seed standard deviation rises about $4.7\times$. Ensembling produced no variance compression and instead diluted the only source carrying effective gain. Theorem~\ref{thm:ensemble} only explains ``ensembling cannot improve''; it does not explain degradation below the single-source level, for which quality dilution is a competing explanation (Section~\ref{sec:m3}).

\begin{theorem}[Probe--transfer ranking can fully reverse]
\label{thm:reversal}
\emph{Assumption.} The diagnostic (probe) prior $q$ and evaluation distribution $p$ are both non-degenerate, with risks $R_q(\theta)=\E_{\tau\sim q}\ell(f_\theta(\tau))$ and $R_p(\theta)=\E_{\tau\sim p}\ell(f_\theta(\tau))$.

\emph{Statement.} If $q$ and $p$ concentrate on two families of tasks with disjoint structure, there exists a pair of models $(\theta_A,\theta_B)$ such that
\begin{equation*}
R_q(\theta_A) > R_q(\theta_B) \quad\text{and}\quad R_p(\theta_A) < R_p(\theta_B),
\end{equation*}
i.e., the good/bad ordering on probes fully reverses on the real evaluation. The contrapositive gives an alignment condition: probe ranking is a sufficient statistic for transfer ranking only if, for every candidate pair, the two risk differences have the same sign.
\end{theorem}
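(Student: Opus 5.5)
The proof is an existence argument, so I will build $\theta_A$ and $\theta_B$ by hand. The idea is to use the disjointness of the two task families to decouple behaviour on $q$ from behaviour on $p$.

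First I will make ``disjoint structure'' precise. There are measurable task sets $\mathcal{S}_q$ and $\mathcal{S}_p$ with $\mathcal{S}_q\cap\mathcal{S}_p=\emptyset$, $q(\mathcal{S}_q)=1$ and $p(\mathcal{S}_p)=1$. I will also assume the model class is rich enough that its behaviour can be specified separately on the two sets, for instance by routing on an input feature that distinguishes them, as a transformer conditioned on the context $D$ can. Under this assumption, for any two admissible predictors $g_1,g_2$ there is a parameter $\theta$ with $f_\theta=g_1$ on $\mathcal{S}_q$ and $f_\theta=g_2$ on $\mathcal{S}_p$.

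Non-degeneracy will be used to exhibit a ``good'' predictor $g^{+}$ and a ``bad'' predictor $g^{-}$ on each family with strictly different risk. For example, take the Bayes-optimal predictor against a predictor shifted by a constant $c\neq 0$. Under squared loss the risk gap is $c^2>0$, because $\E[(g^{+}+c-y^*)^2]=\E[(g^{+}-y^*)^2]+c^2$ whenever the residual of $g^{+}$ has conditional mean zero. Non-degeneracy of $q$ and $p$ only ensures these risks are finite and that the family carries positive mass, so the gap is nonzero.

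With these two ingredients the construction is immediate. Set
\[
\theta_A \text{ to } (g^{-}_q \text{ on } \mathcal{S}_q,\; g^{+}_p \text{ on } \mathcal{S}_p), \qquad \theta_B \text{ to } (g^{+}_q \text{ on } \mathcal{S}_q,\; g^{-}_p \text{ on } \mathcal{S}_p).
\]
Since $R_q$ depends only on behaviour on $\mathcal{S}_q$ and $R_p$ only on behaviour on $\mathcal{S}_p$, we get $R_q(\theta_A)-R_q(\theta_B)=c^2>0$ and $R_p(\theta_A)-R_p(\theta_B)=-c^2<0$. This is exactly the reversal claimed in the statement.

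For the contrapositive, I will note that probe ranking determines transfer ranking for every pair exactly when $\operatorname{sign}(R_q(\theta)-R_q(\theta'))=\operatorname{sign}(R_p(\theta)-R_p(\theta'))$ for all candidate pairs. The construction above violates this, which gives the stated alignment condition as a tautological restatement.

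The main obstacle is the separability step. The statement gives no explicit hypothesis on the model class, so I will state the richness assumption explicitly, phrased as the class being closed under piecewise combination across disjoint supports. I will argue that it holds for PFN transformers at the level of functions, since they are universal approximators on context sets. If exact equality is too strong, I will fall back to approximate equality within $\epsilon<c^2/2$, which still preserves strict inequalities.
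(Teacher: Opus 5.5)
Your proposal is correct and uses the same construction as the paper: $\theta_A$ is good on the $p$-family and bad on the $q$-family, $\theta_B$ is the reverse, and disjointness decouples the two risks. Your version is more careful than the paper's one-line proof, because it states explicitly two things that ``let $\theta_A$ fit $\alpha$ perfectly and fail on $\beta$'' uses tacitly: the model-class separability assumption, and a strict risk gap, which you obtain from the constant shift $c$.
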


\begin{proof}
Construct two tasks $\alpha$, $\beta$ with disjoint structure; let $p$ concentrate on $\alpha$ and $q$ on $\beta$. Let $\theta_A$ fit $\alpha$ perfectly and fail on $\beta$, and $\theta_B$ the opposite. Then $R_q(\theta_A)>R_q(\theta_B)$ while $R_p(\theta_A)<R_p(\theta_B)$.
\end{proof}

\emph{Remark (experimental anchoring).} Synthetic diagnostic metrics are valid selection proxies only when $q$ and $p$ are aligned on task-relevant structure. We observe four divergence instances that instantiate this theorem. In M1, the probe panel ranks the general prior best and the distilled graph degraded, while the transfer panel fully reverses; in M3, probes show no difference while transfer differs; in M4 round one, the true graph does not reach the top two and the random graph overtakes it, triggering the self-restraint clause; and the random graph's high cheap-screening rank reverses under full training (Section~\ref{sec:m5}). The direct engineering consequence is that real-domain transfer carries weight $0.50$ in Equation~\eqref{eq:score}. Theorem~\ref{thm:reversal}'s conclusion is implemented in the scoring design, not merely cited as a caution.

\begin{theorem}[Risk outside the training support cannot be bounded]
\label{thm:support}
\emph{Assumption.} The training prior $\mathcal{P}$ generates only tasks with feature dimension $d\in[d_{\min},d_{\max}]$; the evaluation task has dimension $d^*>d_{\max}$.

\emph{Statement.} For any model-selection criterion $S$ that takes values only from the training risk (or the model's behavior on the support), there exist two target systems $g_1,g_2$ that induce identical training loss on all tasks with $d\le d_{\max}$ yet whose evaluation risk at dimension $d^*$ can differ arbitrarily; hence $S$ cannot give any upper bound on the model's evaluation risk at dimension $d^*$.
\end{theorem}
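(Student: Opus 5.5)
The plan is a standard indistinguishability (``no free lunch'') construction. I will exhibit two target systems that coincide on every in-support task but differ arbitrarily at dimension $d^*$. Any criterion $S$ that sees only in-support behavior then takes the same value on both, so it cannot bound the evaluation risk.

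First I fix the setting. A target system is a map $g$ assigning to each task dimension $d$ a structural mechanism, concretely an outcome function $g^{(d)}:\mathbb{R}^d\times\{0,1\}\to\mathbb{R}$ used by the SCM of Definition~\ref{def:scm} to produce $y^*$. I take the trained model $f_\theta$ as given; it may equally be the output of $\mathcal{T}(\theta_0,q)$ from Equation~\eqref{eq:injection}. I also fix its prediction $\hat y(x_q,t_q)$ on evaluation tasks of dimension $d^*$.

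The construction is as follows. Choose any in-support mechanism family $\{g^{(d)}\}_{d\le d_{\max}}$ and set $g_1^{(d)}=g_2^{(d)}=g^{(d)}$ for all $d\le d_{\max}$. At $d^*$, let $g_1^{(d^*)}$ be arbitrary, for example equal to the model's own prediction, so that its risk is $0$ under squared loss. Let $g_2^{(d^*)}:=g_1^{(d^*)}+c\,h$, where $h$ is a fixed function of the coordinates $x_{d_{\max}+1},\dots,x_{d^*}$ with $\E[h^2]=1$, and $c>0$ is free. Coordinates beyond $d_{\max}$ never appear in any training task, and the two systems agree on all of $d\le d_{\max}$. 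Hence the training distributions, the training losses of Equation~\eqref{eq:pfn}, and the model's behavior on the support are identical under $g_1$ and $g_2$, and $S(g_1)=S(g_2)$. At dimension $d^*$, the squared-loss evaluation risk is $0$ under $g_1$ and $c^2$ under $g_2$, up to cross terms that vanish by construction. Letting $c\to\infty$ shows that no function of $S$ upper-bounds the evaluation risk. Any bound $B(S)$ would have to satisfy $B(S(g_2))\ge c^2$ for every $c$, which is impossible.

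The main obstacle is making ``takes values only from the training risk or behavior on the support'' precise enough that $S(g_1)=S(g_2)$ follows formally. I would state it as measurability: $S$ is a function of the restriction of the task-generating process to $\{d\le d_{\max}\}$, together with the model, and that restriction is identical under $g_1$ and $g_2$. A secondary subtlety is ensuring the perturbation $c\,h$ really is invisible in-support even if the architecture shares parameters across dimensions. This is handled by making $h$ depend only on coordinates that do not exist when $d\le d_{\max}$, so the difference is supported entirely off the training support. I would close with a remark that any loss $\ell$ which is unbounded in the prediction error yields the same conclusion.
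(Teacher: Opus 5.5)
Your overall strategy matches the paper's: two systems agree on every task with $d\le d_{\max}$, so any support-only criterion $S$ cannot tell them apart. The gap is in the step that computes the evaluation risks. You freeze the model's output as $\hat y(x_q,t_q)$ at dimension $d^*$. A PFN, however, outputs $f_\theta(D,q)$, and the context $D$ of a dimension-$d^*$ evaluation task is itself generated by $g^{(d^*)}$. Passing from $g_1$ to $g_2$ shifts every context outcome by $c\,h(x_i)$, so the prediction moves too, and nothing prevents it from tracking the shift. For example, if at $d^*$ the model behaved like in-context least squares and $h$ were linear in $x_{d^*}$, least squares would reproduce the added term exactly, and the two risks would coincide for every $c$. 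For the same reason, choosing $g_1^{(d^*)}$ ``equal to the model's own prediction'' is not well defined: that prediction is a random function of a context drawn from $g_1^{(d^*)}$ itself, whereas a mechanism is a fixed function of $(x,t)$ and noise. So the claim ``risk $0$ under $g_1$, $c^2$ under $g_2$'' is unsupported, and your conclusion rests on it.

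There are two ways to close this. The paper's sketch argues that the off-support behavior of $g_1,g_2$ is not constrained by the training objective, which effectively places the two systems on the model side. Keep the evaluation SCM fixed and compare $f_\theta$ with a predictor $f'$ that equals $f_\theta$ on all tasks with $d\le d_{\max}$ and equals $f_\theta(D,q)+c\,h(x_q)$ at $d^*$. Then $S$ takes the same value on both. With $\E[h^2]=1$, Cauchy--Schwarz gives $R_{d^*}(f')=R_{d^*}(f_\theta)+2c\,\E[h(x_q)(f_\theta(D,q)-y^*)]+c^2\ge\big(c-\sqrt{R_{d^*}(f_\theta)}\big)^2$, which diverges as $c\to\infty$. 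If you prefer your world-side version, make the perturbation invisible to the context. Take $h=\mathbf{1}_A$ for an event $A$ in the coordinates $x_{d_{\max}+1},\dots,x_{d^*}$ with probability $p\in(0,1)$, and couple the two worlds through common covariates, treatments and noise. Let $E$ be the event that $x_q\in A$ while no context point lies in $A$; it has probability $p(1-p)^n$ for i.i.d.\ covariates. On $E$ the inputs $(D,q)$, and hence the predictions, coincide in both worlds, while the targets differ by $c$. Writing $R^{g_i}_{d^*}$ for the fixed model's risk in world $g_i$, this gives $R^{g_1}_{d^*}+R^{g_2}_{d^*}\ge\tfrac{c^2}{2}\,p(1-p)^n$. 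With $g_1$ fixed, this forces $R^{g_2}_{d^*}\to\infty$.
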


\begin{proof}
The expectation of objective \eqref{eq:pfn} takes values only on the support of $\mathcal{P}$. The systems $g_1,g_2$ induce the same loss on the support, so every training-risk-based criterion takes the same value on both; yet their behavior off the support is not constrained by the training objective, so their evaluation risk can differ arbitrarily.
\end{proof}

\emph{Remark (experimental anchoring).} When the evaluation benchmark's input dimension (IHDP, 26 dimensions) far exceeds the prior-support dimension range ($[1,6]$), no prior-level means can repair the evaluation risk. This is a structural failure, not a recipe defect. M9 stage 2 is the empirical counterpart: even with a domain-matched prior (IHDP domain-card distillation), injection still significantly degrades relative to the uninjected base ($\Delta=+1.96$, $p<0.00001$). Theorem~\ref{thm:support} only states that risk outside the support cannot be bounded by the training objective; it does not imply that dimension is the dominant factor in this instance (the attribution is not separated; Section~\ref{sec:m9}). It also points to the repair direction. The only way to remove this boundary is to expand the support itself (retrain the base on a higher-dimensional prior), not to pick a better prior within the existing support. Intuitively, for task types the model has never seen, such as feature dimensions far beyond the training prior, no criterion based on training experience can guarantee its behavior; this is why out-of-support high-dimensional benchmarks cannot be repaired by prior injection.

\paragraph{Relation of the theorems to the three-condition regularity.}
Theorem~\ref{thm:ensemble} explains ``why select rather than combine'', Theorem~\ref{thm:reversal} explains ``what evidence selection must rely on'', and Theorem~\ref{thm:support} explains ``how far selection can repair''; the three provide conceptual grounding for, respectively, the selection mechanism, the evidence discipline, and the non-repair boundary within the three-condition regularity. The three-condition regularity itself (base underfit, prior-domain match, task within support) is an empirical induction from M1--M9, and we do not claim it is a provable proposition.

\section{Setup}
\label{sec:setup}

\paragraph{Model and training.}
The base is Do-PFN \cite{robertson2025dopfns} (7.34M parameters). An earlier platform diagnosis found that the officially released weights perform well but are insensitive to continued training, while a weaker locally retrained base (denoted v2) is plastic; all post-training therefore initializes from v2 and continues for 20k steps (learning rate $5\times10^{-5}$, hyperparameters frozen during the diagnosis stage), while cheap pre-screening uses 5k steps. See the Reproducibility Statement below.

\paragraph{Evaluation.}
The primary evaluation domain is law\_race, a synthetic causal benchmark with a ground-truth graph, where the model's task is to estimate interventional effects from observational data. The primary metric is the normalized mean-squared error (NMSE) on that domain, the final adjudication criterion. The adjacent domain sales monitors collateral damage. Reference anchors: the uninjected official base attains NMSE $0.0308\pm0.0083$ on the primary domain. A set of synthetic diagnostic tasks (probe panel) is also maintained, but per the divergence finding of Section~\ref{sec:discussion}, probes only monitor whether existing capability degrades and are not used as proxies for generalization. External-validity comparisons (M8, M9) additionally place the configurations on our two domains and on the causal community's standard benchmarks (IHDP, Lalonde), compared against classical estimators and CausalPFN under the same protocol. NMSE, PEHE, and ATE relative deviation are all error-type metrics; lower means a more accurate estimate, and we do not restate the direction for each figure.

Nine controlled experiment modules (M1--M9) answer the three questions posed in the introduction. Sections 5.1--5.5 form the mechanism chain: M1 decomposes the gain's origin, M2 examines the observational-regime ratio, M3 examines multi-source ensembling, M4 gives the closed-loop selection main result, and M5 adjudicates the mechanism. Sections 5.6 and 5.7 test generalization through the cross-domain closed loop (M7) and base dependence (M6). Sections 5.8 and 5.9 test external validity: a same-protocol comparison against the SOTA and classical estimators (M8), and two honest boundaries on standard causal benchmarks (M9). All comparisons follow the statistical discipline of Section 4.

The table below maps each module to the question it answers, its key result, and its statistical level. Presentation order follows narrative considerations (mechanism first, then generalization, then external validity) and does not fully match module numbering. Statistical levels follow the discipline of Section 4.

\paragraph{Statistical discipline.}
All comparisons pre-register their protocol; paired designs pair strictly by training seed (or data split); the significance criterion for M1--M8 is $p<0.05$ and $|\Delta|\ge0.02$, while the M9 (IHDP/Lalonde) effect-size threshold is $|\Delta|\ge0.05$, with the two criteria kept separate; comparisons with $n\le3$ are reported as ``directional'' only.

\section{Results}

Nine controlled experiment modules (M1--M9) answer the three questions posed in the introduction. Sections~\ref{sec:m1}--\ref{sec:m5} form the mechanism chain: M1 decomposes the gain's origin, M2 examines the observational-regime ratio, M3 examines multi-source ensembling, M4 gives the closed-loop selection main result, and M5 adjudicates the mechanism. Sections~\ref{sec:m7} and \ref{sec:m6} test generalization through the cross-domain closed loop (M7) and base dependence (M6). Sections~\ref{sec:m8} and \ref{sec:m9} test external validity: a same-protocol comparison against the SOTA and classical estimators (M8), and two honest boundaries on standard causal benchmarks (M9). All comparisons follow the statistical discipline of Section~\ref{sec:setup}.

Table~\ref{tab:modules} maps each module to the question it answers, its key result, and its statistical level. Presentation order follows narrative considerations (mechanism first, then generalization, then external validity) and does not fully match module numbering. Statistical levels follow the discipline of Section~\ref{sec:setup}.

\begin{table}[t]
\centering
\caption{Module map: the question each experiment answers, the key result, and the statistical level.}
\label{tab:modules}
\small
\begin{tabular}{@{}p{1.55cm}p{3.6cm}p{5.2cm}p{3.6cm}@{}}
\toprule
Module & Question answered & Key result & Statistical level \\
\midrule
M1 (\S\ref{sec:m1}) & Content or narrowing? & $\sim$72\% narrowing, $\sim$28\% content & Content share directional ($n=3$, $p=0.071$) \\
M2 (\S\ref{sec:m2}) & Can gain and collateral cost be decoupled? & Non-monotone interior optimum; adjacent domain degrades monotonically & $n=3$ descriptive \\
M3 (\S\ref{sec:m3}) & Can multiple sources be combined? & Ensembling judged negative, degradation of $0.111$ & Directional ($p=0.171$) \\
M4 (\S\ref{sec:m4}) & Which one to inject; can it be automated? & Winner $2.75\times$ gain, three improvements & Formally significant ($p=0.0086$, $p=0.0109$, $n=5$) \\
M5 (\S\ref{sec:m5}) & Why does the gain occur? & Content beats diversity, V-shaped temperature & Directional ($n=3$) \\
M7 (\S\ref{sec:m7}) & Does it transfer to a new domain? & sales from-scratch $3.0\times$ gain, does not reach official & Directional ($n=3$, unpaired, $p=0.0001$) \\
M6 (\S\ref{sec:m6}) & Does the gain depend on the base? & Already-strong base degrades $4$--$9\times$, ranking reverses & Directional ($n=3$, $p=0.0281$) \\
M8 (\S\ref{sec:m8}) & Against SOTA and classical baselines? & Primary domain wins all ($4.1\times$/$6.1\times$ over CausalPFN); secondary domain boundary & Per-domain, $p\le0.006$ \\
M9 (\S\ref{sec:m9}) & Boundary on standard benchmarks? & Two honest boundaries; attribution not separated & Stage 1 significant; stage 2 significant degradation \\
\bottomrule
\end{tabular}
\end{table}

\FloatBarrier
\subsection{Where the gain comes from: about 72\% from distribution narrowing, only 28\% from knowledge content (M1)}
\label{sec:m1}

Five configurations ranked by primary-domain NMSE: LLM-distilled graph v4 $0.068\pm0.020$, true graph oracle $0.078\pm0.011$, random graph rand $0.111\pm0.013$, alternative distillation configuration AESD $0.153\pm0.041$, general-prior control $0.223\pm0.016$. Replacing the general prior with the random graph (also a graph without knowledge content; the only change is that the training distribution narrows) explains $0.112$ of improvement, about 72\% of the total gain. Replacing the random graph with the LLM-distilled graph (introducing knowledge content) improves only a further $0.043$, about 28\% (directional at $n=3$, $p=0.071$). The distilled graph v4 is statistically indistinguishable from the true graph oracle, and v4 also attains the best adjacent-domain value across the board ($0.166$). M1 also exposes the first diagnosis--generalization divergence. The synthetic diagnostic panel ranks the general prior best and the distilled graph degraded, while the transfer panel fully reverses (the first instance of the misalignment proved by Theorem~\ref{thm:reversal}, see Section~\ref{sec:theory}).

\begin{figure}[t]
\centering
\includegraphics[width=0.82\textwidth]{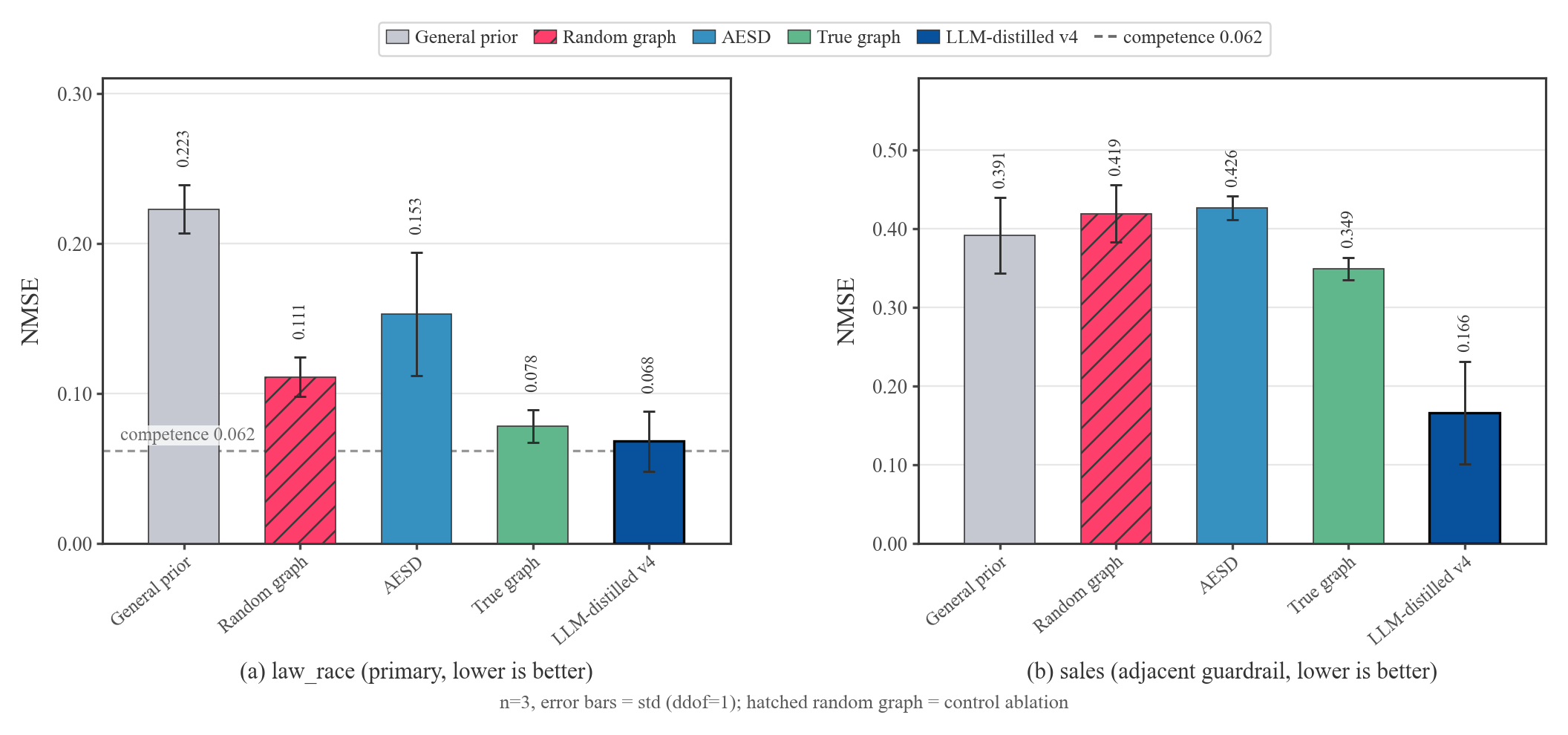}
\caption{M1 single-source injection gain: primary-domain NMSE of the five configurations ($n=3$, error bars $\pm1$ sample std); dashed line is the pre-registered competence threshold $0.062$. The distilled graph is statistically indistinguishable from the true graph and better than the general prior and random graph (directional at $n=3$).}
\label{fig:f2}
\end{figure}

\FloatBarrier
\subsection{Observational-regime ratio: an interior optimum, but the adjacent domain pays a cost (M2)}
\label{sec:m2}

A sweep along the ``confounded-task share'' axis yields a clear shape. Pure confounded primary-domain NMSE is $0.068\pm0.020$; the 50/30/20 mix drops to $0.052\pm0.016$, first crossing the pre-registered competence threshold $0.062$ (the threshold is about twice the official base's primary-domain error; an NMSE at or below it counts as competent); further dilution to 33/33/34 rebounds to $0.100\pm0.015$. The curve is a non-monotone interior optimum. The same axis, however, exposes a collateral cost. Adjacent-domain sales NMSE degrades monotonically along this axis ($0.166 \to 0.387 \to 0.414$). Gain and collateral cost cannot be attained simultaneously by a single configuration, and this tension is the direct motivation for closed-loop selection.

\begin{figure}[t]
\centering
\includegraphics[width=0.66\textwidth]{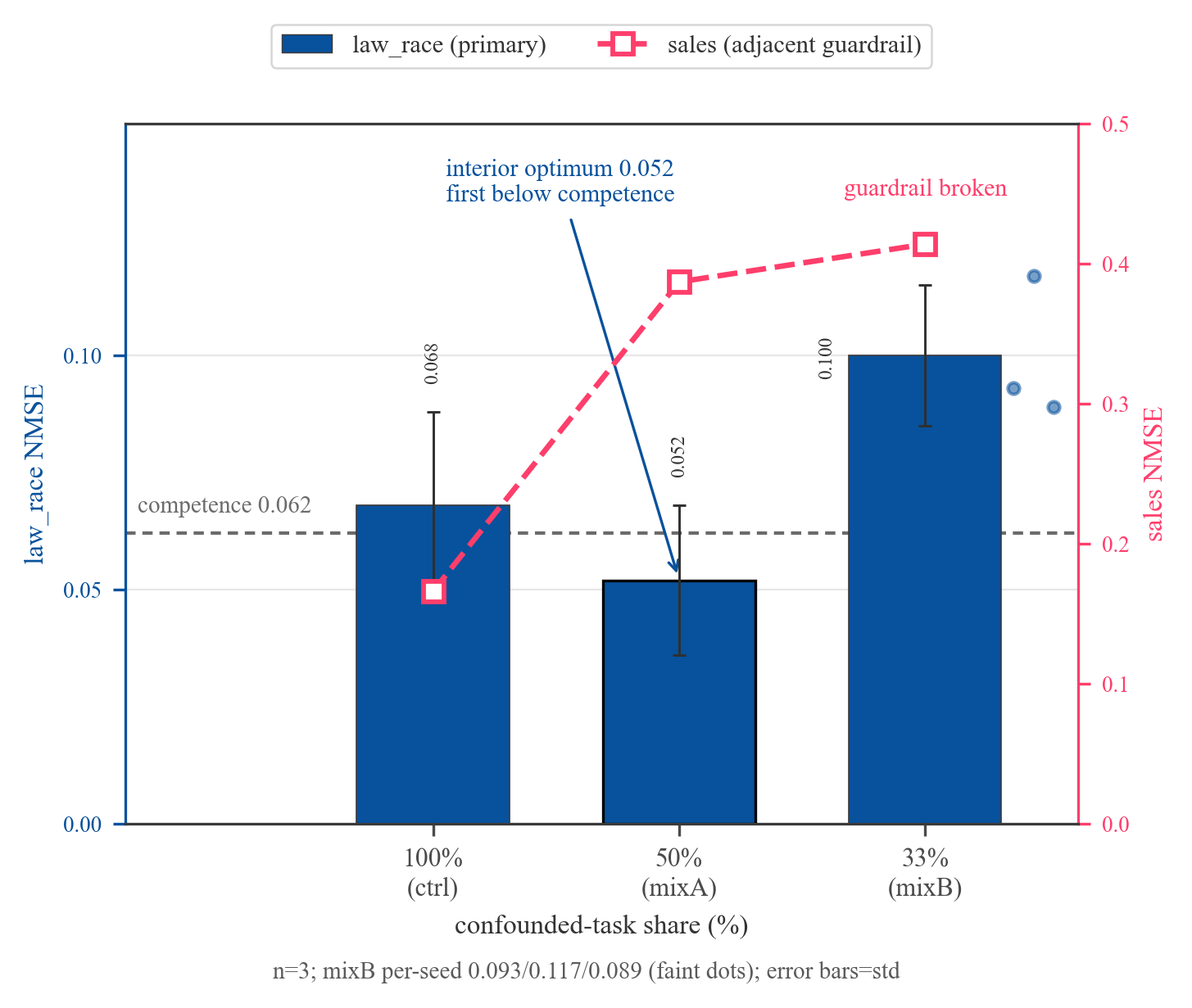}
\caption{M2 observational-regime ratio curve ($n=3$): the primary domain shows a non-monotone interior optimum (50/30/20 mix best, first crossing the competence threshold $0.062$); the adjacent domain degrades monotonically along the same axis. Both axes are NMSE.}
\label{fig:f3}
\end{figure}

\FloatBarrier
\subsection{Multi-source ensembling judged negative (M3, directional evidence)}
\label{sec:m3}

The control is V4 single source + 50/30/20 mix (primary-domain NMSE $0.052\pm0.016$). The ensemble mixes three distillation sources, giving $0.163\pm0.076$, a degradation of $0.111$ (directional negative, $p=0.171$), with all three seeds degrading. Two accompanying pieces of evidence: the cross-seed standard deviation rises about $4.7\times$ (a variance ratio of about $22\times$), again showing no variance compression from ensembling; and the synthetic diagnostic metric shows no difference between the two groups ($p=0.70$), diagnosis again diverging from real transfer. As for the mechanism of degradation, we give two not-yet-separated explanations. Explanation one (positively correlated errors): different LLMs' graph errors for the same domain are similar (similar omissions, similar spurious edges), and averaging cannot cancel positively correlated errors. Theorem~\ref{thm:ensemble} gives a lower bound on ensemble error under equal-variance, positive-correlation assumptions, but that lower bound can only explain ``ensembling cannot improve'', not the degradation below the single-source level observed here. Explanation two (quality dilution): the other two distillation sources aesd ($0.153$) and 9b ($0.135$) are far inferior in quality to V4 ($0.068$), and equal-weight averaging may be dragged down by the inferior sources regardless of whether the errors are correlated. We did not directly measure inter-source error correlation, nor set up a ``same quality, only correlation differs'' control, so the two explanations cannot be separated. We treat only ``ensembling judged negative'' itself as a directional conclusion and list the mechanism attribution as a working hypothesis.

\FloatBarrier
\subsection{Closed-loop selection: main result and three improvements (M4)}
\label{sec:m4}

In round-one cheap pre-screening, the composite-score ranking of the seven candidates triggered the self-restraint clause (true graph not in the top two, random graph overtaking), and the ranking was demoted to ``exclude clearly poor candidates''. This demotion was later shown by mechanism experiments to be well-founded. The round-two main result ($n=5$ paired seeds, full 20k steps): the winner configuration (higher-temperature distilled graph + 50/30/20 mix) attains primary-domain NMSE $0.0211\pm0.0048$, about one third of the baseline $0.0580\pm0.0143$ (a $2.75\times$ gain; paired $t=4.80$, \emph{$p=0.0086$, formally significant}). Its error level also falls below the uninjected official base ($0.0308\pm0.0083$). This is a descriptive cross-lineage comparison: the winner is built on the locally retrained base, the official base is not part of the paired test, and the same recipe degrades when injected into the official base (see Section~\ref{sec:m6}). Three improvements hold simultaneously: a significant primary-domain gain; adjacent-domain sales not broken but improved ($0.356\pm0.020$ vs $0.389\pm0.013$, paired $t=4.49$, \emph{$p=0.0109$, formally significant}, direction 5/5); and no degradation signal on the diagnostic panel (descriptive). The round-one cheap scores agree with the full-training ranking on the top candidates, which supports the extrapolability of pre-screening for these candidates.

\begin{figure}[t]
\centering
\includegraphics[width=0.66\textwidth]{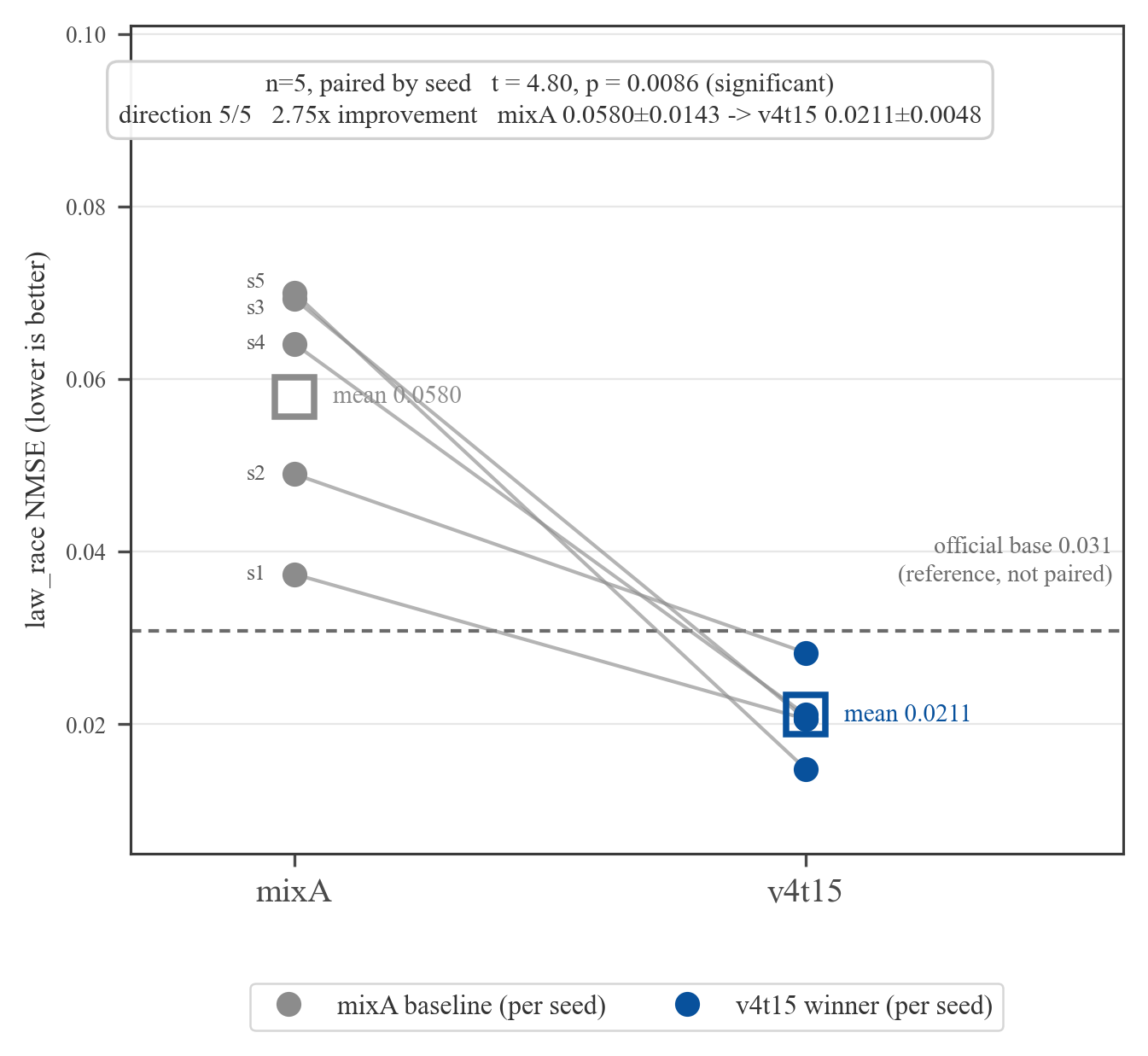}
\caption{M4 main result: mixA and the winner configuration paired strictly by seed ($n=5$, seed $1\leftrightarrow1$ through $5\leftrightarrow5$), five seed-connection lines all sloping downward; squares are means. The horizontal dashed line is the uninjected official base $0.0308$, reference only and not part of the paired test. Paired $t=4.80$, $p=0.0086$, a $2.75\times$ gain, direction 5/5.}
\label{fig:f5}
\end{figure}

\begin{table}[t]
\centering
\caption{Full set of main-result metrics ($n=5$, $\pm1$ sample std; significance criterion $p<0.05$ and $|\Delta|\ge0.02$).}
\label{tab:main}
\small
\begin{tabular}{@{}p{2.8cm}p{2.7cm}p{4.5cm}p{4.5cm}@{}}
\toprule
Dimension & Winner v4t15 & Control & Conclusion \\
\midrule
law\_race (primary) & $0.0211\pm0.0048$ & mixA $0.0580\pm0.0143$; official base $0.0308\pm0.0083$ & $2.75\times$ gain, formally significant ($p=0.0086$); error level below the official base (descriptive cross-lineage, see text) \\
sales (adjacent monitor) & $0.356\pm0.020$ & mixA $0.389\pm0.013$ & Not broken but improved, formally significant ($p=0.0109$, direction 5/5) \\
Diagnostic panel (general) & $0.0092$--$0.0120$ & Base level $\sim0.010$ & No observed forgetting \\
Cross-seed std & $0.0048$ & $0.0143$ & About $3\times$ narrower \\
\bottomrule
\end{tabular}
\end{table}

Figure~\ref{fig:f4} shows the cheap-screening-versus-full-training comparison for the three candidates that have paired data.

\begin{figure}[t]
\centering
\includegraphics[width=0.72\textwidth]{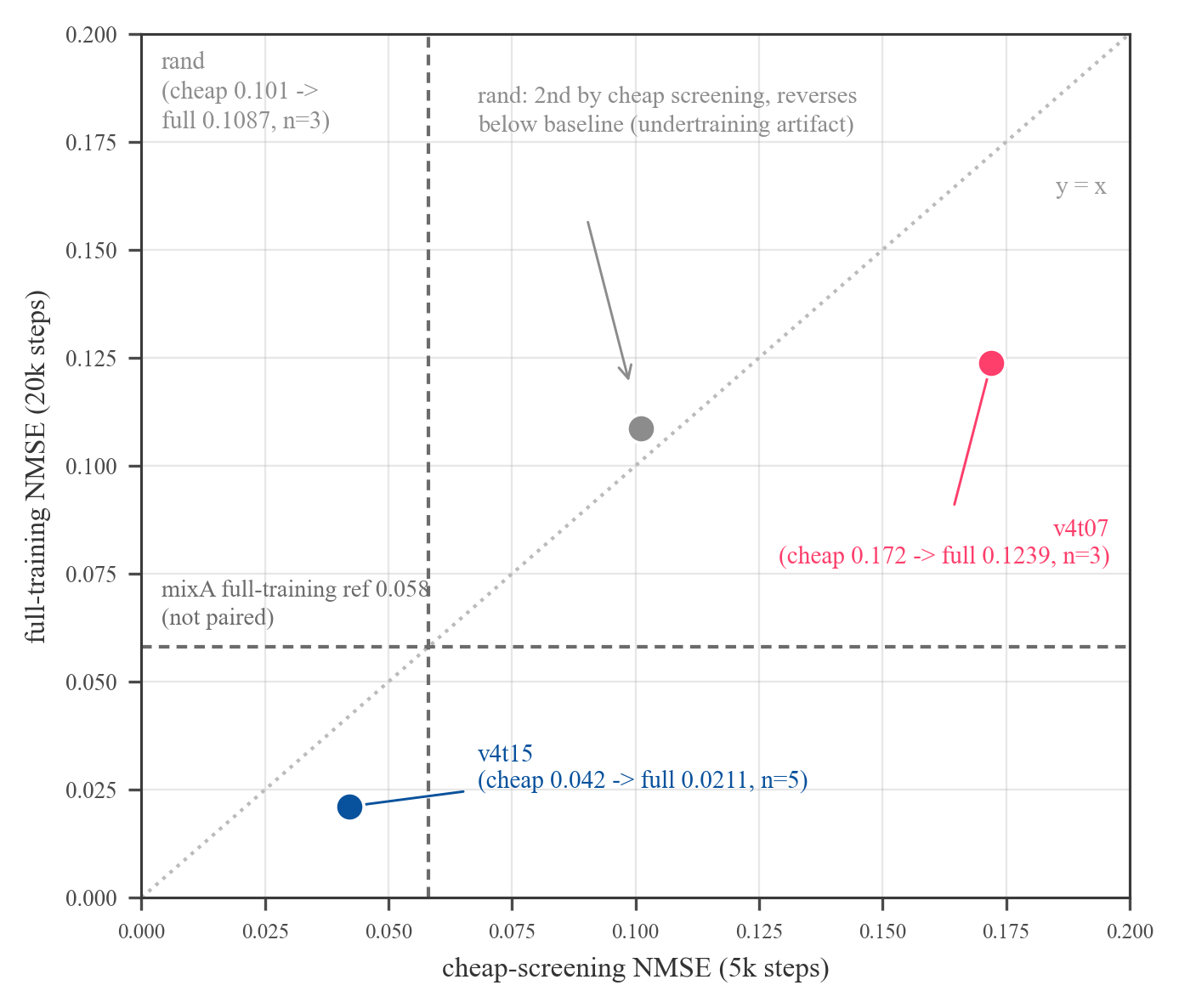}
\caption{Cheap screening (5k-step NMSE) versus full training (20k-step NMSE) for the three candidates with paired data; dashed lines mark the mixA full-training reference $0.0580$; the dotted line is $y=x$. The random graph ranks second by cheap screening yet reverses under full training to below the baseline.}
\label{fig:f4}
\end{figure}

\FloatBarrier
\subsection{Mechanism: content beats diversity (M5)}
\label{sec:m5}

The temperature axis is V-shaped. At $T=0.3$ the primary-domain NMSE is $0.052$; $T=0.7$ degrades to $0.124$ (3/3 seeds agree, $p=0.030$, directional under the $n=3$ discipline, see Section~\ref{sec:setup}); $T=1.5$ is best at $0.023$. The medium-temperature distilled graph undergoes edge-drop narrowing: under medium uncertainty the model tends toward conservative output and loses the edge structure that carries the gain. The random graph has the same order of structural diversity as the best configuration but shuffled semantics. Its primary-domain NMSE is $0.109\pm0.025$, $2.1\times$ worse than the baseline, with 3/3 seeds agreeing in degradation (directional, $p=0.093$), rejecting the ``gain comes from pure diversity'' hypothesis. The direct paired comparison of the random graph against the best configuration (data available) is not reported here, and ``the same order of structural diversity'' lacks an operational measure, so this mechanism adjudication rests on directional evidence. In cheap pre-screening the random graph ranked second, yet under full training it degrades below the baseline, showing that the diversity advantage in the cheap-screening stage does not extrapolate.

\begin{figure}[t]
\centering
\includegraphics[width=0.7\textwidth]{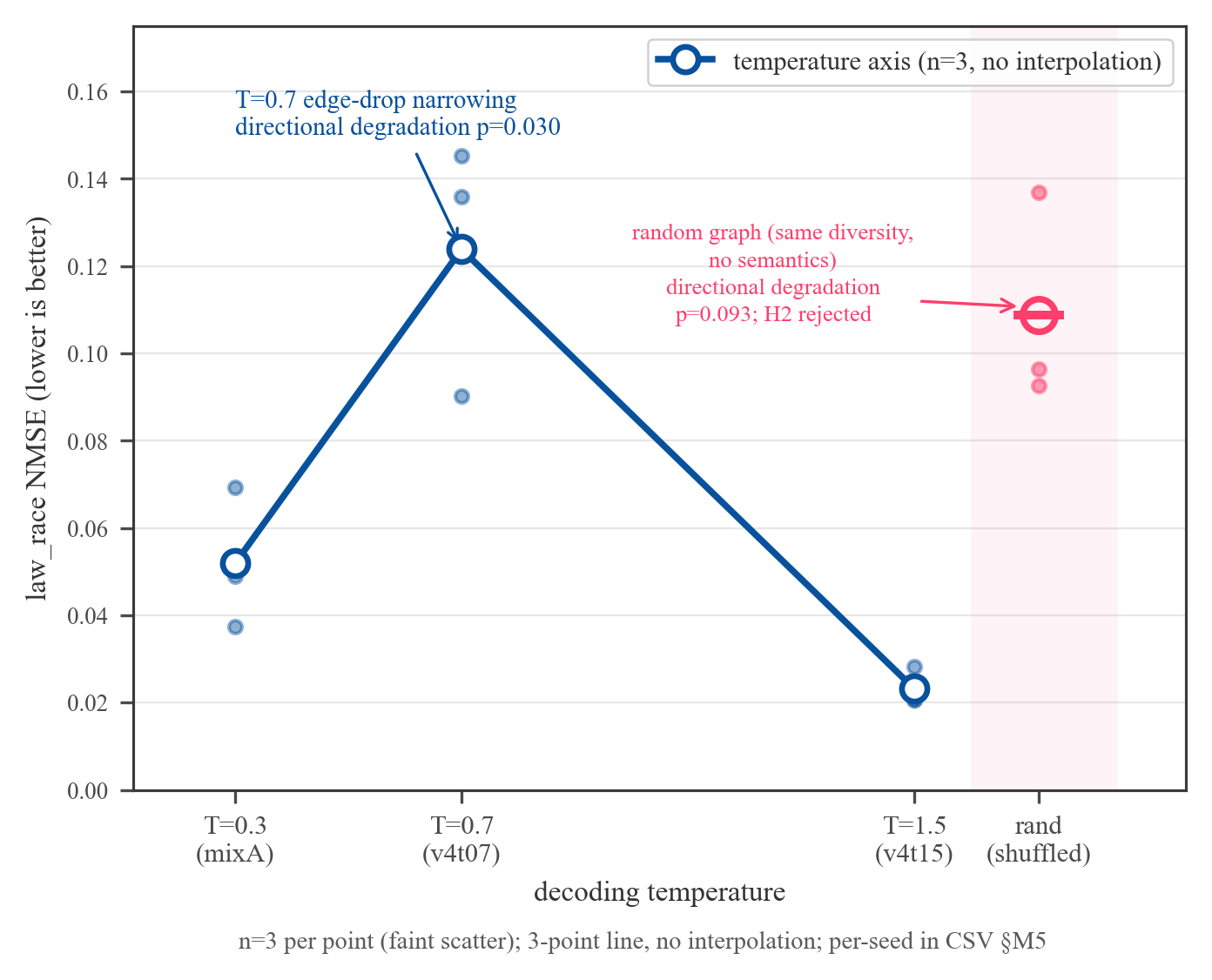}
\caption{M5 mechanism: temperature-axis three-point line ($n=3$, no interpolation); $T=0.7$ degrades due to edge-drop narrowing ($p=0.030$, directional under the $n=3$ discipline); the hollow point is the random-graph control (same structural diversity, shuffled semantics), $2.1\times$ worse than the baseline (directional, $p=0.093$), rejecting the ``pure diversity'' hypothesis.}
\label{fig:f6}
\end{figure}

\FloatBarrier
\subsection{Cross-domain closed loop: from-scratch distillation works on a new domain (M7, positive)}
\label{sec:m7}

To test whether the pipeline transfers to a new domain, we distilled a causal graph from scratch on the sales domain, compiled it, and injected it into the v2 base. Sales-domain NMSE drops from the control (the law\_race prior evaluated on sales) $0.3872$ to $0.1292$, \emph{a $3.0\times$ gain (unpaired $t=-15.21$, $p=0.0001$; directional at $n=3$)}. Cross-domain seeds do not form a legitimate pairing, so an unpaired test is used. It does not reach the official base's $0.0336$ on sales (recorded honestly). One unexpected finding is that the sales-domain prior conversely improves the law\_race primary domain (from the v2 anchor $0.221$ to $0.1103$, 3/3 agreeing).

\begin{figure}[t]
\centering
\includegraphics[width=0.68\textwidth]{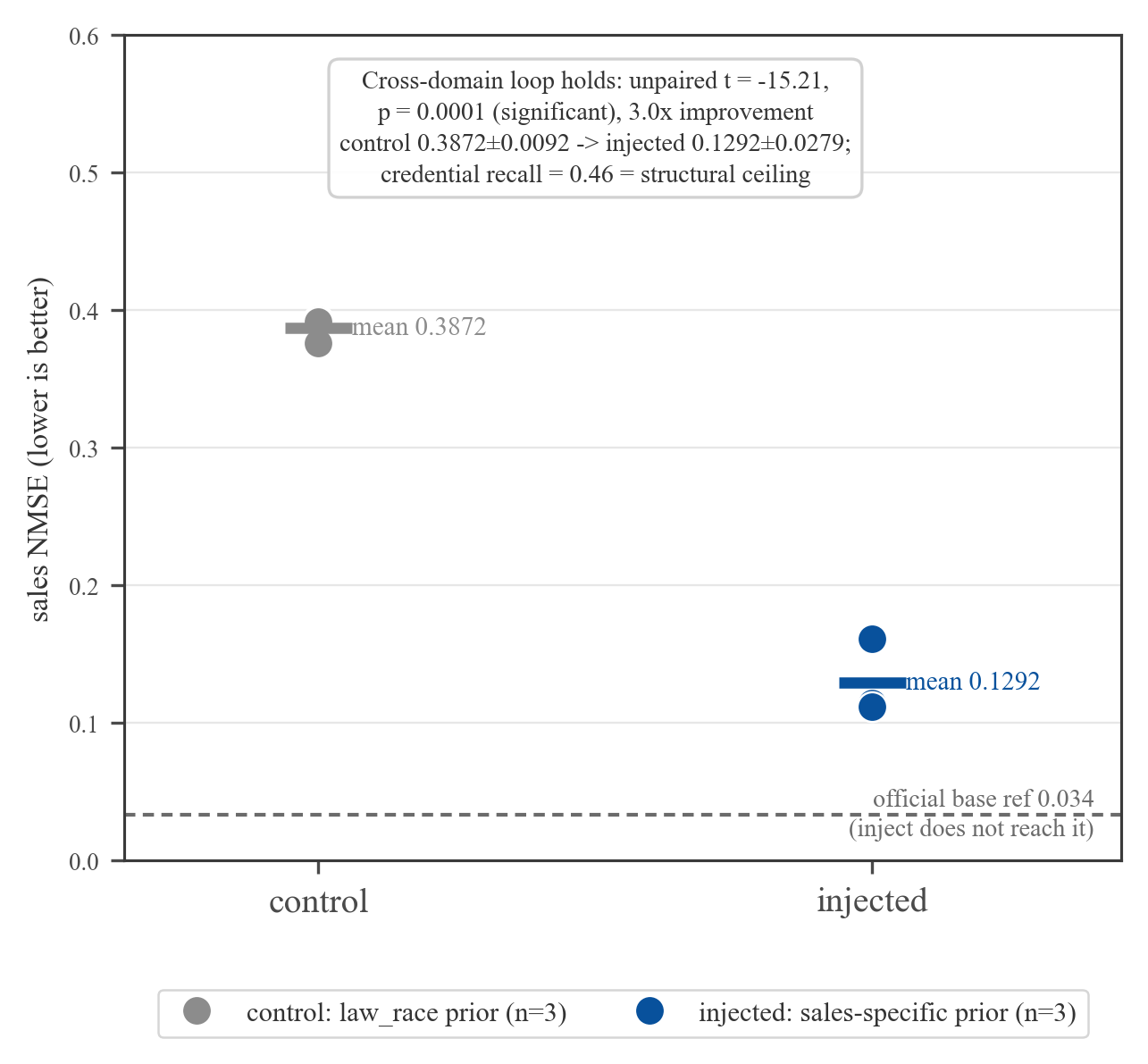}
\caption{M7 cross-domain closed loop: sales-specific prior injection ($n=3$) versus the law\_race prior ($n=3$), per-seed dots plus means; the horizontal dashed line is the official base's reference value $0.0336$ on sales (not reached, recorded honestly). Unpaired $t=-15.21$, $p=0.0001$, a $3.0\times$ gain (directional at $n=3$).}
\label{fig:f9}
\end{figure}

\FloatBarrier
\subsection{Base dependence: the same recipe degrades an already-strong base (M6, negative; directional at $n=3$)}
\label{sec:m6}

We injected the recipe that performed best on the primary domain into the uninjected but already-strong official base. Primary-domain NMSE degrades from $0.0308$ to $0.279$ (higher-temperature configuration) and $0.116$ (mix baseline), a $4$--$9\times$ degradation; on both configurations the adjacent domain also degrades below the official reference. More critically, \emph{the ranking fully reverses}. On the v2 base the higher-temperature configuration is significantly better than the mix baseline (M4, $p=0.0086$), but on the official base it is directionally worse (paired $\Delta=+0.1634$, $t=5.84$, $p=0.0281$, direction 0/3; directional under the $n=3$ discipline, see Section~\ref{sec:setup}). The preliminary conclusion is that, under our frozen recipe, the direction of injection gain is opposite to the base's native level on that domain, producing a significant gain on the underfit v2 base (M4, $n=5$) and a directional degradation on the already-strong official base (M6, $n=3$). We emphasize that this contrast spans only two base points, which is insufficient to support a quantitative monotone relation. We did not re-search the continued-training protocol for the official base, so the degradation may partly come from the mismatch between the recipe and the base state. The more conservative statement is ``under our frozen recipe, injection does not transfer across bases'' (limitations in Section~\ref{sec:discussion}).

\begin{figure}[t]
\centering
\includegraphics[width=0.7\textwidth]{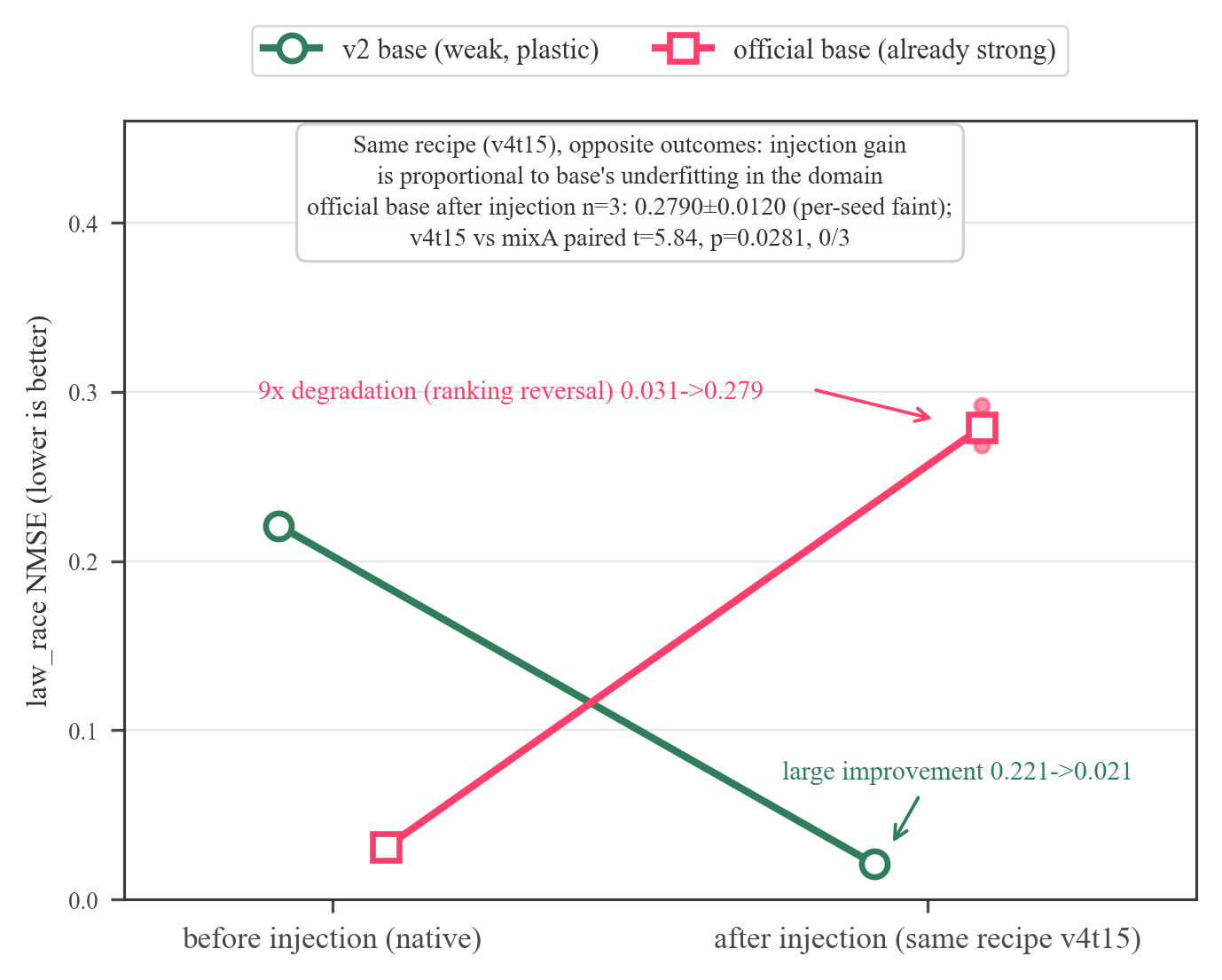}
\caption{M6 base dependence: the same recipe injected into two bases, before/after comparison ($n=3$). The v2 base improves substantially ($0.221\to0.021$); the official base degrades about $9\times$ ($0.0308\to0.279$), and the ranking of the two configurations fully reverses relative to M4.}
\label{fig:f8}
\end{figure}

\FloatBarrier
\subsection{External validity: primary domain wins against the SOTA, secondary domain yields to simple estimators (M8)}
\label{sec:m8}

On our two domains, we compare against the direct SOTA (CausalPFN \cite{balazadeh2025causalpfn}) and a classical-estimator panel (S/T/X/DR-learner \cite{kunzel2019metalearners,kennedy2023dr}, CausalForestDML \cite{athey2018grf}, naive ATE) under the same 5-fold split and the same protocol as model training. \emph{On law\_race, both Do-PFN versions lead all seven machine-learning baselines in point estimate}: the higher-temperature configuration $0.0211$, the official $0.0308$, leading CausalPFN $0.1276$ by $4.1\times$/$6.1\times$ ($p\le0.006$). The only baseline without a significant difference is the naive ATE constant baseline ($0.0278$), a methodological caution: this benchmark has limited individual-CATE heterogeneity. \emph{On sales, simple meta-learners ($\le0.018$) beat all amortized models} (official $0.0336$, CausalPFN $0.1019$, higher-temperature configuration $0.3562$). The $0.3562$ here is the law\_race prior's cross-domain guardrail value measured on sales, strictly distinct from the sales-specific prior of Section~\ref{sec:m7} ($0.1292$). The inductive bias of amortized pretraining wins where the task distribution matches the pretraining prior, and yields to per-dataset fitting on very-small-sample domains where the effect is near-constant.

\begin{table}[t]
\centering
\caption{M8 full comparison on our two domains ($n=5$ fixed splits, paired by split; NMSE, lower is better). Rows are sorted by law\_race NMSE. Per-pair $p$-values are paired $t$-tests of each Do-PFN arm against the named baseline.}
\label{tab:t5}
\small
\begin{tabular}{@{}lccp{5.9cm}@{}}
\toprule
Method & law\_race NMSE & sales NMSE & Outcome vs.\ baselines \\
\midrule
Do-PFN v4t15 (injected) & $\mathbf{0.0211}$ & $0.3562$ & On law\_race, leads all seven baselines in point estimate; significantly better than all machine-learning baselines ($p\le0.0060$ vs CausalPFN; $p\le0.0017$ vs S/T/X/DR-learner and CausalForestDML); no significant difference from naive ATE. On sales, last; significantly worse than all six baselines ($p\le0.030$). \\
Do-PFN official & $0.0308$ & $0.0336$ & On law\_race, second; significantly better than CausalPFN ($p=0.0028$) and all four meta/forest baselines; no significant difference from naive ATE ($p=0.569$). On sales, fifth; directional losses to the top four ($p\in[0.051,0.079]$); significantly better than S-learner only ($p=0.022$). \\
CausalPFN & $0.1276\pm0.0339$ & $0.1019\pm0.0671$ & Direct SOTA. \\
naive ATE & $0.0278$ & $\mathbf{0.0103}$ & Constant-effect baseline; top on sales (effect near-constant). \\
DR-learner & $0.7311$ & $0.0103$ & \\
T-learner & $0.2708$ & $0.0109$ & \\
X-learner & $0.2397$ & $0.0179$ & \\
S-learner & $0.2336$ & $0.3262$ & \\
CausalForestDML & $0.2766$ & $0.1050$ & \\
\bottomrule
\end{tabular}
\end{table}

\begin{figure}[t]
\centering
\includegraphics[width=\textwidth]{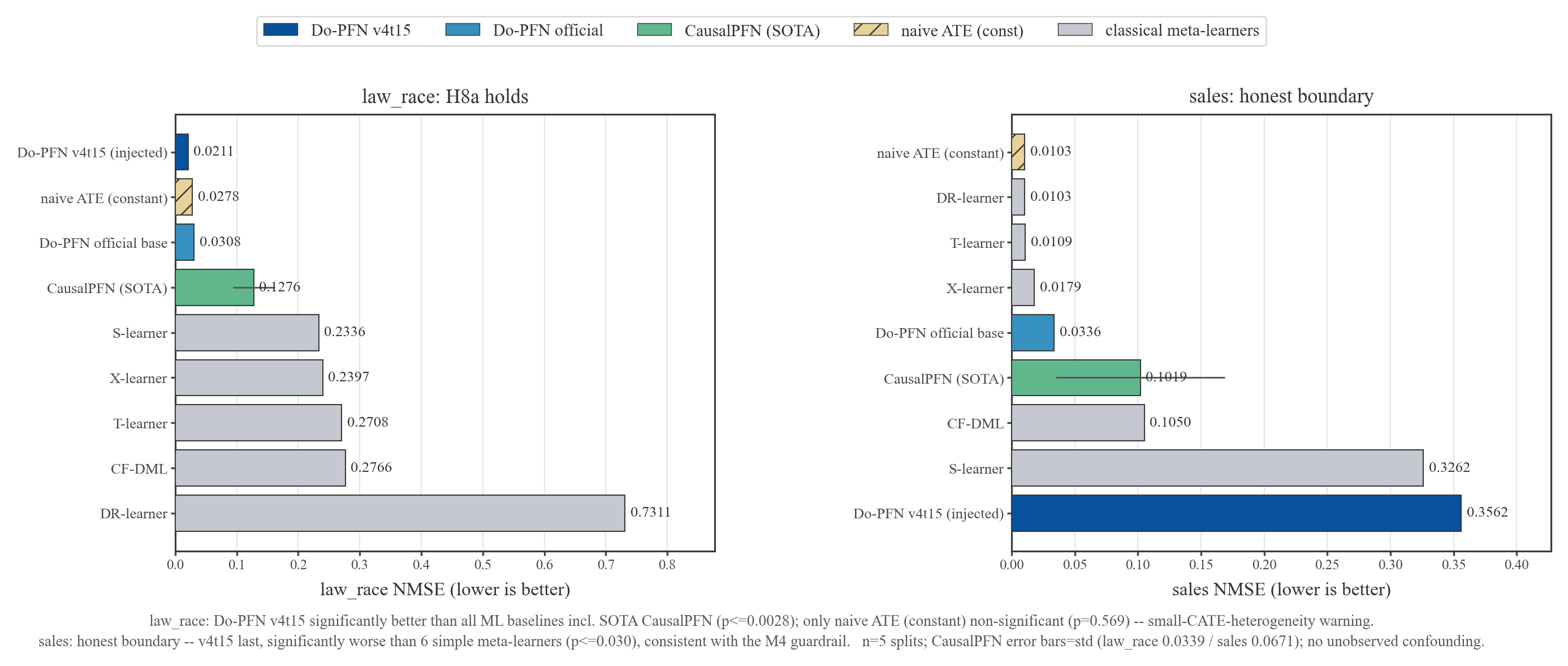}
\caption{M8 SOTA comparison (same 5-fold protocol): left panel, on law\_race both Do-PFN versions lead all machine-learning baselines in point estimate ($4.1\times$/$6.1\times$ over CausalPFN, $p\le0.006$); right panel, on sales simple meta-learners beat all amortized models. naive ATE is the constant baseline. Per-pair $p$-values in Table~\ref{tab:t5}.}
\label{fig:f10}
\end{figure}

\FloatBarrier
\subsection{Standard-benchmark boundary: two honest boundaries (M9)}
\label{sec:m9}

Finally, we push the evaluation to the causal community's standard benchmarks: IHDP \cite{hill2011ihdp} (semi-synthetic, with ground truth, primary metric PEHE, $n=50$ realization pairing) and Lalonde \cite{lalonde1986evaluating,dehejia1999causal} (real observational data + experimental-benchmark ATE \$1794.34, primary metric ATE relative deviation, $n=5$ folds). The effect-size threshold is pre-registered at $|\Delta|\ge0.05$.

\paragraph{Stage 1, pure evaluation: cross-domain negative transfer (first honest boundary).}
On IHDP, CausalPFN is in a class of its own (PEHE $0.7444\pm0.9289$), classical meta-learners $3.26$--$4.27$, while the three Do-PFN versions fall in $4.87$--$6.96$, of which the injected configuration (law\_race domain-specific prior) is bottom at $6.9645$, significantly worse than all 7 baselines. The official base $6.4402$ is also significantly worse than all baselines. On Lalonde, the two uninjected bases are on par with classical methods (no significant difference), while the injected configuration $1.2673$ is significantly worse than 6/7 baselines other than naive ATE. \emph{The cross-domain cost of the domain-specific prior reproduces consistently on both domains}: the higher-temperature configuration, relative to the uninjected base, degrades significantly on IHDP ($\Delta=+2.09$, $p<0.0001$) and on Lalonde ($\Delta=+0.49$, $p=0.0016$). Reported honestly, one unexpected finding is that the retrained v2 base is significantly stronger than the official base on IHDP ($\Delta=-1.57$, $p=0.00037$).

\paragraph{Stage 2, domain-specific distillation: still does not repair the out-of-support high-dimensional benchmark (second honest boundary).}
Does replacing the prior with the target domain's own distilled graph repair it? We distilled on the IHDP domain card and injected, 5-seed final protocol. After injection, IHDP PEHE is $6.8340\pm9.7329$, \emph{significantly degrading relative to the uninjected base ($\Delta=+1.96$, $p<0.00001$, repair failed)}. Relative to the cross-domain prior there is only a weak directional improvement, not reaching significance ($\Delta=-0.13$, $p=0.076$, not reversed after adding seeds). Attribution (not separated): the IHDP benchmark's 26 dimensions far exceed the prior-support dimension range $[1,6]$, an out-of-support input. We list out-of-support dimensionality as the leading candidate explanation, but this attribution has not been confirmed by a separation experiment. Competing explanations include the prior compiled from the IHDP domain card being itself low-quality, and 20k steps of continued training being insufficient. The v2 base is also significantly stronger than the official base on IHDP (see Stage 1), which shows that factors beyond dimension play a substantial role. We have not reconciled this difference. Theorem~\ref{thm:support} only states that risk outside the training support cannot be bounded by the training objective, and does not imply that dimension is the dominant factor in this instance. Separation experiments such as feature truncation or retraining on a higher-dimensional prior are listed in the outlook (Section~\ref{sec:discussion}). Guardrail per-domain report: the IHDP prior carries a consistent mild cost on sales (worse 5/5) and is directionally neutral-to-favorable on law\_race (better 3/5).

\begin{figure}[t]
\centering
\includegraphics[width=\textwidth]{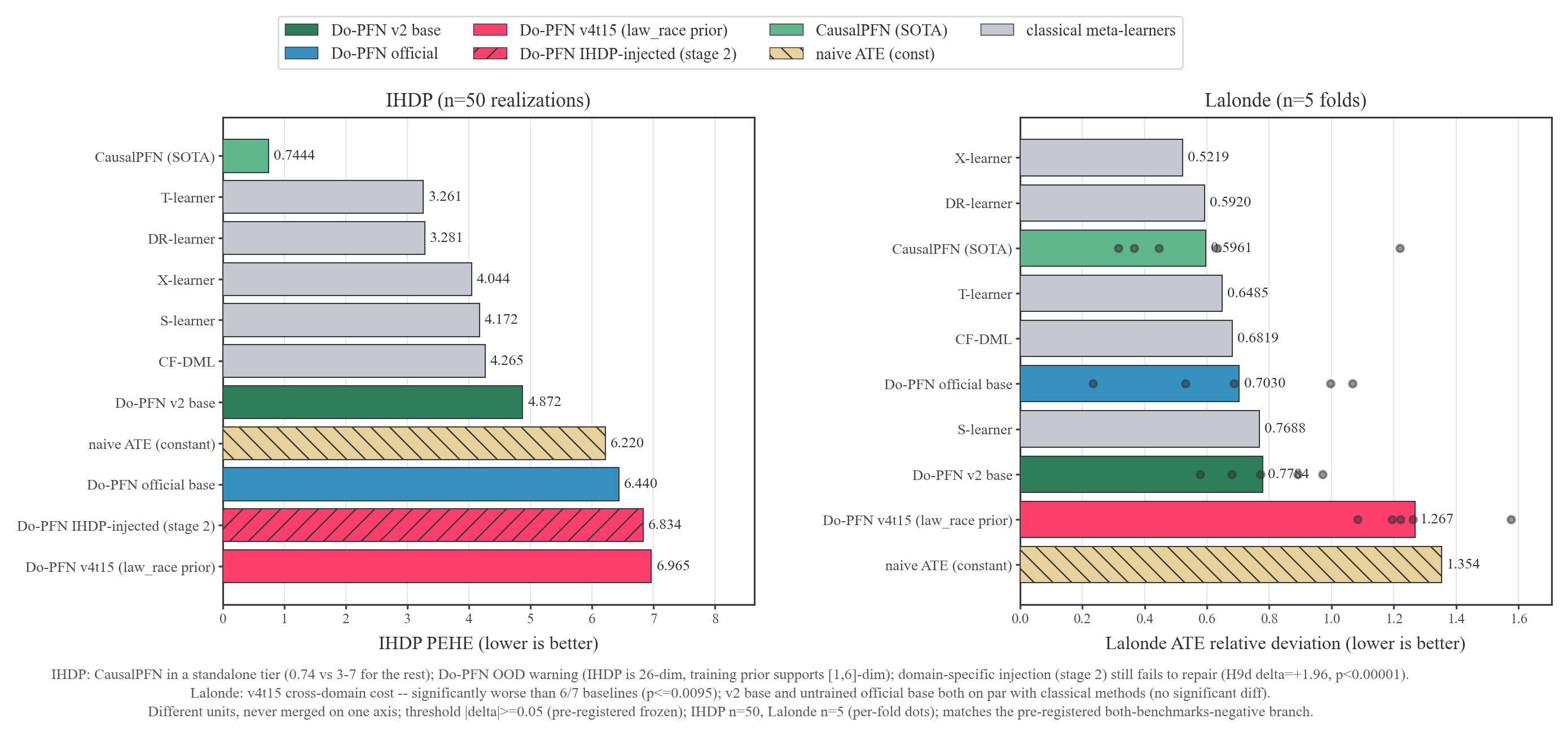}
\caption{M9 standard benchmarks (the two panels have different metric units and are plotted separately): left panel, IHDP PEHE ($n=50$ realization pairing, effect-size threshold $|\Delta|\ge0.05$), CausalPFN in a class of its own ($0.7444$), the three Do-PFN versions in $4.87$--$6.96$, the injected configuration bottom; right panel, Lalonde ATE relative deviation ($n=5$), the two uninjected bases on par with classical methods statistically, the injected configuration significantly worse than 6/7 baselines.}
\label{fig:f11}
\end{figure}

The two honest boundaries point to the same conclusion: \emph{prior-domain--task-domain matching is a necessary but not sufficient condition for injection to help; when the task's dimension exceeds the base's training-prior support, no prior injection can repair it.} Appendix Figure~\ref{fig:f7} gives a conceptual schematic of the four diagnosis--generalization divergence instances.

\begin{figure}[t]
\centering
\includegraphics[width=0.68\textwidth]{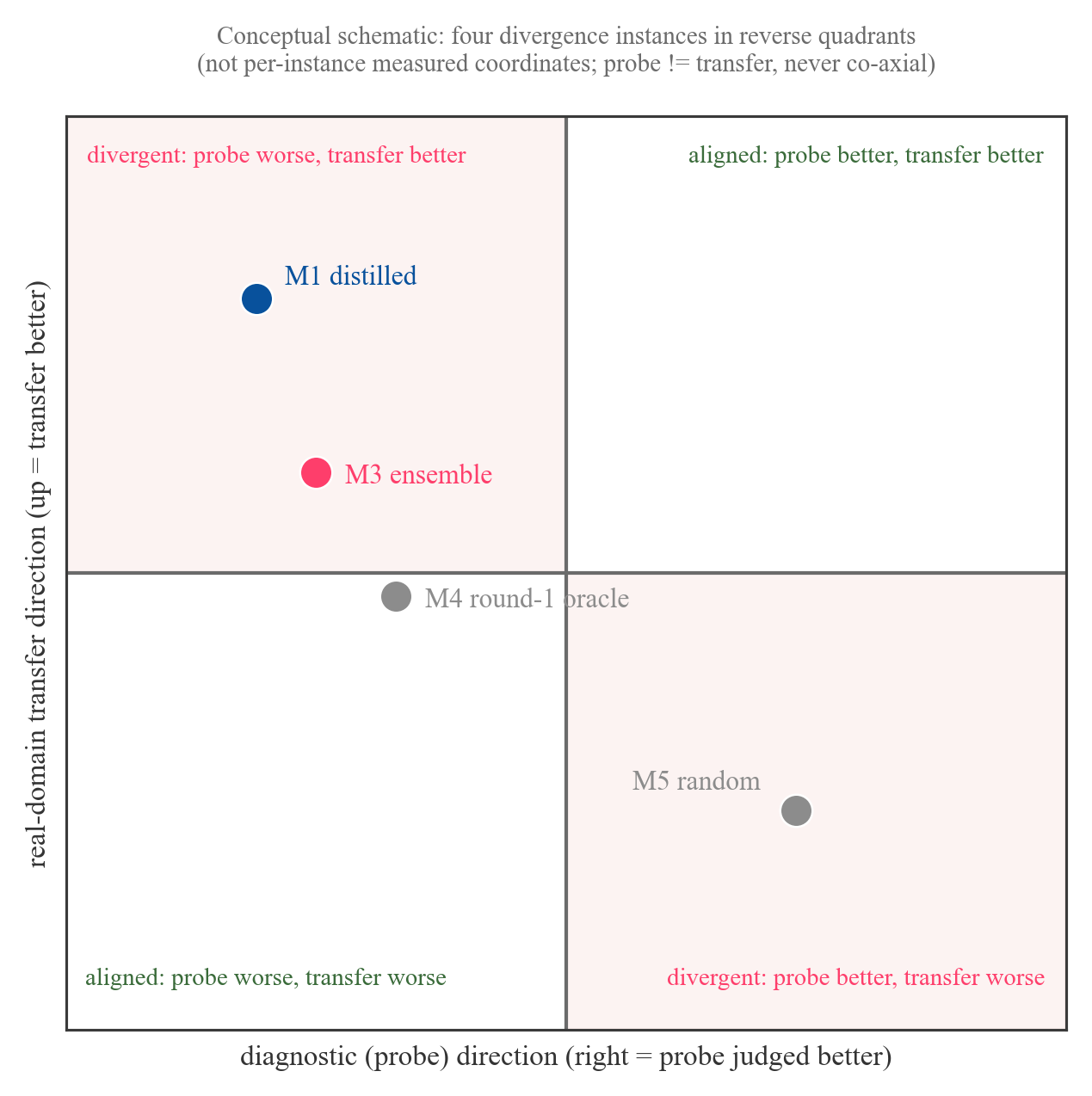}
\caption{Conceptual schematic of the four diagnosis--generalization divergence instances placed in the (probe direction, transfer direction) plane; the reverse quadrants (probe worse but transfer better, and probe better but transfer worse) are shaded. Conceptual schematic, not per-instance measured coordinates.}
\label{fig:f7}
\end{figure}

\FloatBarrier
\section{Discussion}
\label{sec:discussion}

Four threads organize the discussion of the nine modules' evidence: the statement and criteria of the three-condition empirical regularity; the attribution of weakness on standard benchmarks; the general implication of the probe--transfer divergence; and implications for evaluation protocols.

\emph{The three-condition empirical regularity: under what conditions does injection help.} Across the nine modules a pattern repeats, and we induce from it the following empirical regularity (working hypothesis): injection yields significant gains when three conditions hold simultaneously. Condition one: the base is underfit on the task domain. Condition two: the prior domain matches the task domain. Condition three: the task lies within the support of the base's training prior. The evidence supporting the three conditions is tabulated below.

\paragraph{The three-condition empirical regularity: under what conditions does injection help.}
Synthesizing the nine modules, we induce the following empirical regularity (working hypothesis): injection yields significant gains when three conditions hold simultaneously. Condition one, the base is underfit on the task domain; condition two, the prior domain matches the task domain; condition three, the task lies within the support of the base's training prior. The evidence supporting the three conditions is tabulated in Table~\ref{tab:conditions}.

\begin{table}[t]
\centering
\caption{Evidence supporting the three-condition empirical regularity.}
\label{tab:conditions}
\small
\begin{tabular}{@{}p{2.6cm}p{3.8cm}p{3.5cm}p{4.7cm}@{}}
\toprule
Condition & Supporting instance & Evidence & Limitation \\
\midrule
Cond.\ 1 base underfit & v2 base law\_race (M4); v2 base sales (M7) & $2.75\times$ significant (M4, $n=5$); $3.0\times$ directional (M7, $n=3$) & Counterexample is M6 official-base degradation; only two bases, and no protocol re-search on the official base \\
Cond.\ 2 prior-domain match & law\_race prior used on IHDP, Lalonde (M9 stage 1) & Significantly negative contribution on both domains & Only negative instances of domain mismatch; no positive replication of domain match \\
Cond.\ 3 task within support & IHDP 26 dim $\gg$ $[1,6]$ (M9 stage 2) & Domain match still does not repair & Single instance; attribution not separated \\
All three hold & v2 base law\_race (M4) & Significant gain & Only one complete condition-combination cell \\
\bottomrule
\end{tabular}
\end{table}

This induction has important limits. Each condition currently rests on a single instance, and the condition-combination matrix contains almost no test cell in which only one condition is violated; necessity therefore cannot be established. A further complication is that condition two (domain mismatch) and condition three (dimension beyond support) are violated simultaneously in M9: IHDP (26 dimensions) and Lalonde (8 dimensions) both exceed the support, and neither domain matches the law\_race prior, so the two conditions are not separable in the current design. Finally, the sales domain satisfies domain match yet does not reach the official base's native level; satisfying the three conditions therefore does not guarantee that the gain reaches or exceeds a strong base, and the regularity makes no quantitative prediction of gain magnitude. We treat this regularity as an empirical induction and working hypothesis, not an ``if and only if'' theorem. Its operational value is a pre-injection checklist: measure the base's native level on the target domain, check whether the task dimension lies within the training-prior support, and check whether the prior domain matches the task domain.

\paragraph{Why amortized models are weak on standard benchmarks.}
Out-of-support dimensionality is the leading candidate explanation. IHDP's 26-dimensional input far exceeds the $[1,6]$ dimensions supported by Do-PFN's training prior, and that is a boundary of the base architecture, outside the reach of prior injection (Theorem~\ref{thm:support} gives a formal statement of this boundary). The attribution is not separated: no separation experiment has confirmed it yet (see Section~\ref{sec:m9} stage 2), and the fact that the v2 base is significantly stronger than the official base on IHDP suggests that factors beyond dimension also play a role; we have not reconciled this difference. CausalPFN is in a class of its own on IHDP, which suggests better coverage of higher-dimensional input in its training prior or architecture; that gap is for a future direct comparison to confront, and injection by our method will not close it. We state plainly: on standard causal benchmarks, our injection method is not superior to, and is often inferior to, existing methods. What it offers is domain specialization and a complete delineation of the applicability boundary.

\paragraph{General implication of the diagnosis--generalization divergence.}
In four of five experiments, synthetic diagnostic metrics and real-domain generalization give opposite testimony. Their common structure is that diagnostic tasks measure ``fit to the synthetic task distribution, while the real domain measures generalization to the real problem'', and the two align only when the prior and the real distribution are aligned. Theorem~\ref{thm:reversal} proves the existence of this misalignment can exist, and the four divergence instances instantiate it. For the broader synthetic-data community the lesson transfers: capability evidence from a synthetic panel cannot be exchanged automatically for transfer evidence, and any evaluation proxy should pass at least one empirical calibration before it takes on a decision-making role.

\paragraph{Implications for evaluation protocols.}
On several domains the naive ATE constant baseline ranks near the top, which exposes limited individual-effect heterogeneity in semi-synthetic benchmarks. Error-type metrics, to some extent, measure heterogeneity recovery rather than estimation of the average effect. Evaluation-protocol design should keep this in mind.

\paragraph{Limitations.}
The main result rests on a limited set of evaluation domains, bases, and candidate pool; $n$ is between 3 and 5 (IHDP reaches $n=50$); ``content beats diversity'' and the three-condition regularity are evidence chains supported by controlled comparisons, not complete proofs. Injection never surpasses a natively strong base. Results on standard causal benchmarks are negative. Lalonde did not undergo stage-2 domain-specific distillation.

\paragraph{Outlook.}
Three direct extensions follow from this work. Retraining the base on a higher-dimensional prior would remove the leading out-of-support factor. A prior--base matching checker could judge automatically whether the three conditions hold before any injection. Underfit bases are candidates for targeted-repair applications.

\section{Conclusion}

Injecting LLM-distilled causal priors into amortized causal-inference models has so far relied on manual trial and error. This paper makes the decision ``which prior to inject'' as a budget-constrained optimization over a candidate prior pool (Equation~\eqref{eq:selection}) and provides a two-stage closed-loop selection procedure. The framework's winner attains a formally significant $2.75\times$ gain on the primary evaluation domain ($n=5$, $p=0.0086$), and its error level falls below that of the uninjected official base. The latter is a descriptive cross-lineage comparison: the winner is built on the locally retrained base and the official base is not part of the paired test (see Sections~\ref{sec:m4}, \ref{sec:m6}). Adjacent-domain generalization significantly improves at the same time ($p=0.0109$), and no capability degradation is observed (descriptive). Mechanism experiments show the gain comes from the semantic content of the distilled graph; structural diversity alone does not account for it (directional evidence). Nine controlled experiments delineate the applicability boundary of injection and induce a three-condition empirical regularity: when base underfit, prior-domain match, and task within training-prior support all hold simultaneously, injection yields significant gains. The limits of this induction (each condition supported by a single instance; conditions two and three not separable) are made explicit (Section~\ref{sec:discussion}). The framework and this regularity together convert prior injection from manual trial and error into an empirically verifiable selection problem, offering future work a reusable procedure and an operational criterion. The scope is bounded by the current evidence: one base family, two in-house domains, and two standard benchmarks.

\section*{Reproducibility Statement}

This statement addresses the reproducibility gaps raised in review (R2-M9) and records what is documented and what remains to be released.

\paragraph{Base and post-training protocol.}
The base is Do-PFN \cite{robertson2025dopfns} (7.34M parameters). The v2 base is a local retrain that is weaker but plastic; all post-training initializes from v2 and continues for 20k steps at learning rate $5\times10^{-5}$, with these hyperparameters frozen during the platform-diagnosis stage (M0); cheap pre-screening uses 5k steps. The full retraining recipe for v2 (initialization, data source, and hyperparameter table) is documented in the platform-diagnosis report and will be released with the code.

\paragraph{Distillation sources.}
The candidate pool draws on three distillation sources: the primary source V4 (DeepSeek-V4-Flash distillation), a 9B-parameter LLM distillation, and an alternative distillation configuration AESD; decoding temperatures $0.3 / 0.7 / 1.5$. The full model identifiers of the 9B source and the complete definition of AESD will be stated in the released materials; until then they are referred to by these internal labels.

\paragraph{Evaluation domains.}
The primary domain law\_race and the adjacent domain sales are synthetic/proprietary causal benchmarks with ground-truth graphs; their generation mechanism and release policy will be stated in the released materials. The standard benchmarks are public: IHDP (dragonnet NPCI setting A, 50 realizations, 80/20 fixed split, seed = realization id) \cite{hill2011ihdp} and Lalonde (NSW/DW observational sample of 614 rows, DW treated 185 + CPS3 control 429, 5-fold fixed stratified folds, benchmark ATE \$1794.34, 8 covariates frozen) \cite{lalonde1986evaluating,dehejia1999causal}; all arms share the same input (z-score by training-set statistics).

\paragraph{Statistical protocol.}
Significance requires $p<0.05$ plus an effect-size threshold of $|\Delta|\ge0.02$ for M1--M8 and $|\Delta|\ge0.05$ for M9; paired tests pair strictly by seed (or split); $n\le3$ comparisons are reported as directional; standard deviations use ddof=1. All protocols were pre-registered.

\paragraph{Code and data availability.}
The code, distilled-graph specs, candidate-pool definitions, and per-seed result JSONs will be released at submission. CausalPFN's published Table 8 numbers (100 realizations) are used only as an order-of-magnitude reference and are not entered into our tables.

\bibliographystyle{plain}
\bibliography{refs}

\end{document}